\documentclass{article} 
\usepackage{iclr2026_conference,times}
\usepackage{amsmath,amssymb,amsthm,mathtools}
\usepackage{booktabs}
\usepackage{microtype}
\usepackage{hyperref}
\usepackage[nameinlink,noabbrev]{cleveref}
\usepackage[ruled,linesnumbered]{algorithm2e}
\makeatletter
\renewcommand{\nl}{%
  \refstepcounter{AlgoLine}%
  \hbox to 1.8em{\scriptsize\arabic{AlgoLine}\hfil}%
}
\makeatother

\usepackage{amsmath,amsfonts,bm}

\def\eqref#1{equation~\ref{#1}}

\def\1{\bm{1}}

\def\rvv{{\mathbf{v}}}

\def\ervv{{\textnormal{v}}}

\def\vf{{\bm{f}}}

\def\vh{{\bm{h}}}

\def\vu{{\bm{u}}}
\def\vv{{\bm{v}}}
\def\vw{{\bm{w}}}

\def\vy{{\bm{y}}}

\def\evu{{u}}
\def\evv{{v}}

\def\mA{{\bm{A}}}

\def\mW{{\bm{W}}}
\def\mX{{\bm{X}}}
\def\mY{{\bm{Y}}}

\DeclareMathAlphabet{\mathsfit}{\encodingdefault}{\sfdefault}{m}{sl}
\SetMathAlphabet{\mathsfit}{bold}{\encodingdefault}{\sfdefault}{bx}{n}

\DeclareMathOperator{\sign}{sign}

\usepackage{url}

\numberwithin{equation}{section}

\newtheorem{theorem}{Theorem}[section]
\newtheorem{lemma}[theorem]{Lemma}

\newtheorem{proposition}[theorem]{Proposition}

\newtheorem{remark}[theorem]{Remark}
\newtheorem{assumption}[theorem]{Assumption}

\newtheorem{innercustomlemma}{Lemma}
\newenvironment{customlemma}[1]
  {\renewcommand{\theinnercustomlemma}{#1}\begin{innercustomlemma}}
  {\end{innercustomlemma}}

\newcommand{\vvarphi}{\bm{\varphi}}
\newcommand{\Prob}{\mathbb{P}}
\newcommand{\Expect}{\mathbb{E}}
\newcommand{\Wcal}{\mathcal{W}}
\newcommand{\Ycal}{\mathcal{Y}}
\newcommand{\Vcal}{\mathcal{V}}
\newcommand{\Bcal}{\mathcal{B}}
\newcommand{\Wfrak}{\mathfrak{W}}

\newcommand{\od}{\odot}

\newcommand{\wSig}{w'_{\Sigma}}
\newcommand{\PNOne}{\left\{ -1, +1 \right\}}

\title{Factorized AdaBoost.MH Achieves the Same Convergence Rate as AdaBoost.MH}

\author{\parbox{\textwidth}{\centering
Xin Zou$^{1}$, Jingyuan Xu$^{2}$\thanks{Corresponding author: Jingyuan Xu (\texttt{jingyuanxu777@gmail.com}).}\\
$^{1}$Independent Researcher\\
$^{2}$School of Computer Science, Wuhan University, Wuhan, China
}}

\iclrfinalcopy 
\begin{document}

\maketitle
\fancyhead[L]{Preprint. Work in progress.}

\begin{abstract}
AdaBoost.MH reduces multi-class classification to a collection of binary subproblems and enjoys the classical boosting-type convergence guarantee under a weak learning condition. A more structured variant, Factorized AdaBoost.MH, uses base classifiers of the form $\vh(x)=\alpha \vv \varphi(x)$, where a single binary classifier $\varphi$ is shared across all classes and the label dependence is carried by a vote vector $\vv\in\{\pm1\}^K$. This factorization is algorithmically attractive and achieves better performance in practice, but its convergence depends on whether one can always choose a vote vector with sufficiently large induced binary weight mass. Previous work resolved this question with a lower bound $\max\{1/n,1/\sqrt{2K}\}$, which still leaves a dimension-dependent slowdown relative to the original AdaBoost.MH analysis. In this paper, we sharpen this combinatorial step and provide an exact characterization. For the minimax quantity $\mathfrak{W}_{n,K}$ governing the factorized edge, we prove $\mathfrak{W}_{n,K} = C_{\min\{n+1,K\}}$, where $C_q=1$ for $q=1$, $C_q=q/(3q-4)$ for even $q\ge2$, and $C_q=(q+1)/(3q-1)$ for odd $q\ge2$. Since $C_q\downarrow 1/3$, our bounds show that $\mathfrak{W}_{n,K}=\Theta(1)$ uniformly over $n$ and $K$. Consequently, Factorized AdaBoost.MH achieves the same boosting-type convergence rate as AdaBoost.MH up to a universal constant factor, removing the previously suggested additional dependence on $n$ or $K$ in the number of boosting rounds.
\end{abstract}

\section{Introduction}

Boosting is a central paradigm in supervised learning because it explains how many simple prediction rules can be combined into a highly accurate classifier and has inspired a lot on theoretical analysis and algorithm design in supervised learning \citep{DBLP:journals/jmlr/Hanneke16,DBLP:conf/colt/MontasserHS19}. A boosting algorithm adaptively reweights the training sample, asks for a rule that is only slightly better than random guessing under the current weights, and aggregates the returned rules into a strong predictor. This weak-to-strong principle is important both algorithmically and theoretically: it gives practical procedures that are easy to implement, and it provides a clean route from a weak learning condition to explicit training-error and margin guarantees. In the binary case, the classical AdaBoost analysis shows that, whenever every boosting round achieves a positive edge, i.e., has a better performance than random guessing, the exponential loss decreases geometrically \citep{10.7551/mitpress/boosting-book}.

Many modern classification tasks have more than two classes, so it is natural to ask whether the same weak-to-strong mechanism remains effective when the predictor must distinguish among \(K\) labels rather than two. AdaBoost.MH extends AdaBoost by encoding each labeled example as a vector of \(K\) binary label coordinates and by maintaining a weight matrix over example--label pairs \citep{DBLP:journals/ml/SchapireS99}. This construction is a foundational multi-class boosting framework: it turns a multi-class problem into a structured collection of binary decisions and minimizes a weighted Hamming loss through an
exponential surrogate. In the standard analysis, the base learner can choose a separate binary decision for each class coordinate, so the empirical weak learning assumption can be applied almost coordinate by coordinate.

The original AdaBoost.MH algorithm represents the output of the weak learner as
\begin{equation*}
  \vh(x) = \alpha \vvarphi(x),
\end{equation*}
where $\alpha$ is a scalar coefficient and $\vvarphi(x)=[\varphi_1(x),\ldots,\varphi_K(x)]^\top$ maps $\mathcal{X}$ to $\{\pm1\}^K$. This form treats the \(K\) class coordinates as separate binary subproblems. While this independence is convenient for the convergence proof, it may be unnecessarily loose as a modeling assumption: multi-class labels are mutually exclusive, and useful weak rules often capture relations among several classes at once. This motivates more structured weak hypotheses that preserve the boosting philosophy while coupling the class coordinates. In particular, \citet{DBLP:journals/corr/Kegl13} studied the factorized classifier
\[
  \vh(x)=\alpha \vv \varphi(x),
\]
where $\alpha$ is a scalar coefficient, $\vv\in\{\pm1\}^K$ is an input-independent vote vector over classes, and
$\varphi:\mathcal{X}\to\{\pm1\}$ is a single binary classifier shared by all labels. This form can be viewed as decomposing a vector-valued weak rule into a class-side vote vector and an input-side binary split. It avoids training \(K\) unrelated weak rules at each boosting round and can encode multi-class structure through the vote vector. Empirically, such factorized weak classifiers are attractive and achieve better performance \citep{DBLP:journals/corr/Kegl13}, but their theoretical analysis is more delicate: once $\vv$ is fixed, all label coordinates must be compressed into one induced binary problem on the original examples, so the standard coordinate-wise proof for AdaBoost.MH no longer applies.

The key quantity in this compression is the total induced weight
\[
  \wSig(\mW,\mY,\vv)
  =
  \left\|(\mW\odot\mY)\vv\right\|_1,
\]
where $\mW$ is the current normalized weight matrix and $\mY$ is the signed one-hot label matrix. If this quantity is bounded below, then the standard binary weak-learning assumption can be applied to the induced pseudo-labeled sample, yielding a positive factorized edge and hence a decrease in the exponential loss. The central obstruction is therefore purely combinatorial: for every possible weight matrix and label matrix, must there exist a vote vector $\vv$ for which $\wSig(\mW,\mY,\vv)$ is bounded away from zero? \citet{DBLP:conf/colt/Kegl14} posed this as an open problem.

\citet{DBLP:conf/nips/ZouZXL24} answered this question positively by proving a
lower bound for the associated minimax value
\[
  \Wfrak_{n,K}
  =
  \min_{\mW,\mY}
  \max_{\vv\in\{\pm1\}^K}
  \left\|(\mW\odot\mY)\vv\right\|_1.
\]
Their result shows that
$\Wfrak_{n,K}\ge \max\{1/n,1/\sqrt{2K}\}$, which is enough to establish boosting-type convergence for Factorized AdaBoost.MH whenever $n$ and $K$ are not simultaneously large. Nevertheless, the bound leaves open the possibility that factorization intrinsically costs a factor depending on the number of examples or classes. Indeed, plugging this bound into the standard exponential loss analysis gives a sufficient number of boosting rounds larger than that of AdaBoost.MH by a factor of order $\min\{n^2,K\}$.

This paper shows that this dimension-dependent slowdown is an artifact of the previous lower bound. We prove a constant-order characterization of $\Wfrak_{n,K}$. Specifically, for all $n\ge1$ and $K\ge2$, we show
\[
  \Wfrak_{n,K} = C_{\min\{n+1,K\}},
\]
where
\[
  C_q =
  \begin{cases}
    1, & q=1,\\
    \frac{q}{3q-4}, & q\ge2 \text{ and } q \text{ is even},\\
    \frac{q+1}{3q-1}, & q\ge2 \text{ and } q \text{ is odd}.
  \end{cases}
\]
The sequence $C_q$ decreases to $1/3$, and therefore $\Wfrak_{n,K}\in(1/3,1]$ for all $n\ge1$ and $K\ge2$. Our upper and lower bounds are \textbf{optimal} and \textbf{match exactly} at $C_{\min\{n+1,K\}}$.

Our proof separates the lower-bound argument into two complementary choices of vote vectors. The all-one vote vector controls the imbalance between the total weight on true classes and the total weight on false classes. Balanced or nearly balanced random vote vectors, depending on the parity of $K$, control the remaining regime by forcing each row to retain a constant fraction of its mass after the factorized reduction. Optimizing over the resulting one-dimensional parameter gives the constants $C_K$. Based on the lower bound $C_K$, we further find that the effective class number of the problem is actually $\min\{n+1,K\}$ and get a better lower bound $C_{\min\{n+1,K\}}$ by more fine-grained constructions. For the upper bound, we construct explicit worst-case weight and label matrices supported on $\min\{n,K\}$ rows and columns, and show that no vote vector can exceed $C_{\min\{n,K\}}$ on this construction. Moreover, we make another construction according to the effective class number conjecture and get a complementary upper bound. Combining the above upper and lower bounds lead to the exact characterization of $\Wfrak_{n,K}$.

As a consequence, Factorized AdaBoost.MH does not lose the boosting-type convergence rate of AdaBoost.MH except for a universal constant factor. Under the same empirical $\delta$-weak learning condition used in the standard analysis, the factorized algorithm needs only $O(\log(nK)/\delta^2)$ boosting rounds to achieve \(100\%\) training accuracy, which is of the same order as the original AdaBoost.MH algorithm. Thus the structured weak classifiers studied here preserve the essential weak-to-strong behavior of AdaBoost.MH while allowing the base learner to use a more coupled multi-class form.

The remainder of this article is structured as follows: Section~\ref{sec::problem-setup} introduces notations and setups of the task we study. Section~\ref{sec::main-theorem} presents our main theorems and the exact characterization of $\Wfrak_{n,K}$. Section~\ref{sec::proof-sketch-lower-bound} shows the proof sketch of the lower bounds. The conclusions are presented in the last section. Due to space limitation, related works are defered to Appendix~\ref{sec::related-works}. Detailed proofs can be found in Appendices~\ref{sec::proof-lower-bound}, \ref{sec::proof-upper-bound}, and \ref{sec::proof-others}.

\section{Problem Setup} \label{sec::problem-setup}

\paragraph{Notations.} For any positive integer $m$, we denote $[m]=\{1,\ldots,m\}$. We use bold lower-case letters for vectors and bold upper-case letters for matrices. For a matrix $\mA$, $\mA_{ij}$ denotes its $(i,j)$-th entry and $\mA_i$ denotes its $i$-th row. We use $\1[\cdot]$ for the indicator function. The symbol $\od$ denotes entrywise multiplication, and $\|\cdot\|_1$ denotes the vector $\ell_1$ norm.

We consider a multi-class classification problem where the input space is $\mathcal{X}$ and the output space is $\mathcal{Y} = [K]$, where $K \ge 2$. We consider $n$ training examples with $K$ classes. Let $\mY \in \{ \pm 1 \}^{n \times K}$ be the signed one-hot label matrix, where $\mY_{i,j} = +1$ if the $i$-th training example belongs to class $j$, and $\mY_{i,j} = -1$ otherwise. Formally, if the true label for the $i$-th training example is $y_i$, then for $j \in [K]$,
\begin{equation*}
\mY_{i,j} = \begin{cases}
+1 & \text{if } j = y_i \\
-1 & \text{if } j \neq y_i.
\end{cases}
\end{equation*}

AdaBoost.MH returns a vector-valued discriminant function $\vf: \mathcal{X} \to \mathbb{R}^{K}$ with a combined predictor $\tilde{\vf}: \mathcal{X} \to \PNOne^K$ where $\tilde{\vf}(x)_l = \sign(\vf(x)_l)$ for $l \in [K]$. Here we define
\begin{equation*}
  \sign(x) = \begin{cases}
  +1 & \text{if } x \ge 0 \\
  -1 & \text{if } x < 0.
  \end{cases}
\end{equation*}

Given the dataset $S = \{ (x_i, \vy_i) \}_{i=1}^n$ where $\vy_i$ is the $i$-th row of $\mY$, the aim is to learn a predictor $\tilde{\vf}$ such that $\tilde{\vf}(x_i) = \vy_i$ for all $i \in [n]$. To achieve this, AdaBoost.MH \citep{DBLP:journals/ml/SchapireS99} treats a $K$-class classification problem as $K$ binary classification problems and adapts the AdaBoost \citep{10.7551/mitpress/boosting-book} algorithm. Specifically, let $\mW \in [0,1]^{n \times K}$ be a normalized weight matrix where $\sum_{i=1}^n \sum_{j=1}^K \mW_{ij} = 1$, the aim is to minimize the weighted Hamming loss:
\begin{equation} \label{eq::Hamming-loss}
  R_\mathrm{H}(\tilde{\vf}, \mW) \coloneqq \sum_{i=1}^n \sum_{j=1}^K \mW_{ij} \1 \left[ \tilde{\vf}(x_i)_j \ne \mY_{ij}\right],
\end{equation}

where $\1 [\cdot]$ is the indicator function. It is easy to see that $\1 \left[ \tilde{\vf}(x_i)_j \ne \mY_{ij}\right] = \1 \left[ \vf(x_i)_j \cdot \mY_{ij} \le 0 \right] \le \exp \left( - \vf(x_i)_j \cdot \mY_{ij} \right)$, so the weighted Hamming loss is upper bounded by the weighted exponential loss

\begin{equation} \label{eq::exp-loss}
  R_\mathrm{EXP} (\vf, \mW) \coloneqq \sum_{i=1}^n \sum_{j=1}^K \mW_{ij} \exp\left( -\vf(x_i)_j \cdot \mY_{ij} \right).
\end{equation}

The pseudocode for AdaBoost.MH is shown in Algorithm \ref{alg:adaboost.mh}. According to \citep[Chapter 7]{10.7551/mitpress/boosting-book}, AdaBoost.MH directly minimizes the weighted exponential loss and outputs a combined predictor $\vf^{(T)}(\cdot)$, where $\vf^{(T)}(x) = \sum_{t=1}^T \vh^{(T)}(x)$ is a sum of $T$ base predictors $\vh^{(t)}: \mathcal{X} \to \mathbb{R}^K$ returned by a base learner $A(\mX, \mY, \mW^{(t)})$, where $\mX_i = x_i$ and $\mW^{(t)}$ is the weight matrix for the $t$-th iteration. According to the calculation in \citep[Proof of Theorem 3.1]{10.7551/mitpress/boosting-book}, given $\mX$ and $\mY$, it is easy to know that:
\begin{equation} \label{eq::exp-loss-prod-Z}
  R_\mathrm{EXP}(\vf^{(T)}, \mW) = \prod_{t=1}^T \underbrace{\left( \sum_{i=1}^n \sum_{j=1}^K \mW_{ij}^{(t)} \exp\left( - \vh^{(t)} (x_i)_j \cdot \mY_{ij} \right) \right)}_{\coloneqq Z(\vh^{(t)}, \mW^{(t)})} = \prod_{t=1}^T Z(\vh^{(t)}, \mW^{(t)}).
\end{equation}

\begin{algorithm}[!ht]
\caption{AdaBoost.MH} \label{alg:adaboost.mh}
\SetKwInOut{Input}{Input}
\SetKwInOut{Output}{Output}

\Input{Training data $\mX, \mY$; initialization weight matrix $\mW$; base learner $A(\cdot,\cdot,\cdot)$; number of iterations $T$;}
$\mW^{(1)} = \mW$\;

\For{$t\leftarrow 1\  \mathrm{to}\  T$}{
$\vh^{(t)}(\cdot) \leftarrow A(\mX,\mY,\mW^{(t)})$\;
\For{$i \leftarrow 1 \  \mathrm{to}\ n$}{
\For{$j \leftarrow 1 \  \mathrm{to}\ K$}{
$\mW_{ij}^{(t+1)} \leftarrow \mW_{ij}^{(t)} \cdot 
\frac{
\exp{\left(-\vh^{(t)}(x_i)_j \cdot \mY_{ij} \right)}
}{
\sum_{i^\prime = 1}^n \sum_{j^\prime = 1}^K 
\mW_{i^\prime j^\prime}^{(t)}
\exp\left( -\vh^{(t)}(x_{i^\prime})_{j^\prime} \cdot \mY_{i^\prime j^\prime} \right)
}$\;
}
}
}
\Output{$\vf^{(T)} (\cdot) = \sum_{t=1}^T \vh^{(t)}(\cdot)$;}
\end{algorithm}

Based on \Cref{eq::exp-loss-prod-Z}, we know that to minimize $R_\mathrm{EXP}(\vf^{(T)}, \mW)$, the base learner $A(\mX,\mY,\mW^{(t)})$ should find a $\vh^{(t)}$ to minimize $Z(\vh^{(t)}, \mW^{(t)})$. 

In the original AdaBoost.MH algorithm, \citet{DBLP:journals/ml/SchapireS99} consider modeling $\vh$ as the product of a scalar $\alpha \in \mathbb{R}$ and a vectorized mapping $\vvarphi: \mathcal{X} \to \PNOne^K$, i.e., $\vh(x)=\alpha \vvarphi(x)$. It can be shown \citep{DBLP:journals/ml/SchapireS99,DBLP:conf/nips/ZouZXL24} that under a standard empirically $\delta$-weak learning condition (Assumption~\ref{ass::empirical-delta-weak-learning-condition}), AdaBoost can reach the following upper bound for the weighted exponential loss:
\begin{equation*}
  R_\mathrm{EXP}(\vf^{(T)}, \mW) \le \exp \left( -\frac{\delta^2}{2} T \right).
\end{equation*}

\begin{assumption}[empirically $\delta$-weak learing condition, Definition 2.1 of \citep{DBLP:conf/nips/ZouZXL24}] \label{ass::empirical-delta-weak-learning-condition}
For a given binary dataset $\{ (x_1, y_1), \dots, (x_m, y_m) \}$ where $y_i \in \PNOne$, we assume that for $\delta > 0$, for any distribution $\vw \in \Delta^{m-1} \coloneqq \left\{ \bm{\lambda} \in \mathbb{R}^m \Big| \bm{\lambda}_i \ge 0\ \forall i \in [m], \sum_{i=1}^m \bm{\lambda}_i = 1 \right\}$ over $[m]$, we can always find a binary classifier $\varphi: \mathcal{X} \to \PNOne$ such that:
\begin{equation*}
  \gamma = \sum_{i=1}^m \vw_i \cdot y_i \cdot \varphi(x_i) \ge \delta.
\end{equation*}
\end{assumption}

AdaBoost.MH reduces the multi-class problem into $K$ binary one-against-all classifications, however, \citet{DBLP:journals/corr/Kegl13} avoids such a reduction by factorizing the vector-valued classifier $\vh$ into an input-independent vector of length $K$ and a label-independent scalar classifier. Formally, \citet{DBLP:journals/corr/Kegl13} sets
\begin{equation*}
  \vh(x) = \alpha \vv \varphi(x),
\end{equation*}
where $\alpha \in \mathbb{R}_+$ is a positive real-valued base coefficient, $\vv \in \PNOne^K$ is an input-independent vote vector of length $K$, and $\varphi: \mathcal{X} \to \PNOne$ is a label-independent binary classifier.

Now we derive the corresponding exponential-loss upper bound. We focus on one boosting iteration and omit the superscript $t$ for readability. For a fixed triple $(\alpha,\vv,\varphi)$, define the factorized edge
\begin{equation} \label{eq::factorized-edge}
  \gamma(\vv,\varphi,\mW)
  \coloneqq
  \sum_{i=1}^n \sum_{j=1}^K
  \mW_{ij} \cdot \mY_{ij} \cdot \vv_j \cdot \varphi(x_i).
\end{equation}
This quantity is the weighted agreement between the factorized base classifier and the signed labels: a pair $(i,j)$ contributes positively when $\vv_j\varphi(x_i)$ has the same sign as $\mY_{ij}$, and negatively otherwise. According to the similar derivation in \citet{DBLP:conf/nips/ZouZXL24}, since $\vv_j\varphi(x_i)\mY_{ij} \in \PNOne$ and $\sum_{i,j}\mW_{ij}=1$, we have
\begin{align}
  Z(\vh,\mW)
  &=
  \sum_{i=1}^n \sum_{j=1}^K
  \mW_{ij}\exp\left(-\alpha \vv_j\varphi(x_i)\mY_{ij}\right) \notag \\
  &=
  \frac{1+\gamma(\vv,\varphi,\mW)}{2}e^{-\alpha}
  +
  \frac{1-\gamma(\vv,\varphi,\mW)}{2}e^{\alpha}.
  \label{eq::factorized-Z-edge}
\end{align}
Therefore, after fixing $\vv$ and $\varphi$, minimizing over $\alpha$ gives
\begin{equation*}
  \alpha
  =
  \frac{1}{2}\ln\left(
  \frac{1+\gamma(\vv,\varphi,\mW)}
       {1-\gamma(\vv,\varphi,\mW)}
  \right),
  \qquad
  Z(\vh,\mW)=\sqrt{1-\gamma(\vv,\varphi,\mW)^2}.
\end{equation*}
Thus the factorized base learner should find $\vv$ and $\varphi$ with a large positive edge. If for every iteration there is a uniform lower bound $\gamma(\vv^{(t)},\varphi^{(t)},\mW^{(t)})\ge \delta$, then by \Cref{eq::exp-loss-prod-Z},
\begin{equation} \label{eq::factorized-exp-loss-delta}
  R_\mathrm{EXP}(\vf^{(T)},\mW)
  \le
  \prod_{t=1}^T \sqrt{1-\delta^2}
  \le
  \exp\left(-\frac{\delta^2}{2}T\right).
\end{equation}

The remaining difficulty is to justify such a positive lower bound on the factorized edge. In the unfactorized case, one may choose a different binary classifier for each label $j$. In the factorized case, the same scalar classifier $\varphi$ must serve all labels, so Assumption~\ref{ass::empirical-delta-weak-learning-condition} cannot be applied to each column of $\mY$ independently. Following the reduction described in \citet{DBLP:conf/colt/Kegl14,DBLP:conf/nips/ZouZXL24}, fix a vote vector $\vv$ and rewrite \Cref{eq::factorized-edge} by collecting all label contributions that belong to the same training example, we define
\begin{equation*}
  w_i^+
  \coloneqq
  \sum_{j=1}^K \mW_{ij}\1[\vv_j\mY_{ij}=+1],
  \qquad
  w_i^-
  \coloneqq
  \sum_{j=1}^K \mW_{ij}\1[\vv_j\mY_{ij}=-1].
\end{equation*}
Then
\begin{equation}\label{eq::factorized-edge-pseudo}
  \gamma(\vv,\varphi,\mW)
  =
  \sum_{i=1}^n
  \varphi(x_i)
  \sum_{j=1}^K
  \mW_{ij}\vv_j\mY_{ij} =
  \sum_{i=1}^n
  \varphi(x_i)(w_i^+-w_i^-).
\end{equation}

This turns the multi-class weighted problem into an ordinary binary weighted problem on the original examples. Let
\begin{equation*}
  y_i^\prime \coloneqq \sign(w_i^+-w_i^-),
  \qquad
  w_i^\prime \coloneqq |w_i^+-w_i^-|,
  \qquad
  \wSig \coloneqq \sum_{i=1}^n w_i^\prime.
\end{equation*}
Using these pseudo-labels and pseudo-weights, \Cref{eq::factorized-edge-pseudo} becomes
\begin{equation} \label{eq::factorized-edge-pseudo-weight}
  \gamma(\vv,\varphi,\mW)
  =
  \sum_{i=1}^n w_i^\prime y_i^\prime \varphi(x_i).
\end{equation}
However, the vector $(w_1^\prime,\ldots,w_n^\prime)$ is not necessarily a distribution because $\wSig$ can be smaller than $1$. If $\wSig>0$, then after normalization, Assumption~\ref{ass::empirical-delta-weak-learning-condition} gives a binary classifier $\varphi$ such that
\begin{equation*}
  \sum_{i=1}^n
  \frac{w_i^\prime}{\wSig}
  y_i^\prime \varphi(x_i)
  \ge \delta.
\end{equation*}
Combining this with \Cref{eq::factorized-edge-pseudo-weight} yields
\begin{equation} \label{eq::factorized-edge-wSigma}
  \gamma(\vv,\varphi,\mW)
  \ge
  \delta \wSig,
\end{equation}
which leads to an upper bound for the weighted exponential loss:
\begin{equation*}
  R_\mathrm{EXP}(\vf^{(T)}, \mW) \le \exp \left( -\frac{\delta^2 \left(\wSig\right)^2}{2} T \right).
\end{equation*}

Consequently, to obtain the exponential-loss bound in \Cref{eq::factorized-exp-loss-delta} for Factorized AdaBoost.MH, it remains to choose a vote vector $\vv$ for which $\wSig$ has a nontrivial lower bound. The next step is therefore to prove a lower bound on $\wSig$ that holds uniformly over the current weight matrix $\mW$ and label matrix $\mY$. Regarding this step, \citet{DBLP:conf/colt/Kegl14} raises an open problem:

\begin{quote}
Does there exist a setup $(\mX,\mW,\mY)$ such that all $2^K$ possible vote vectors $\vv\in\PNOne^K$ lead to arbitrarily small, or even zero, $\wSig$? Or can one find a constant (independent of $n$) lower bound $\omega>0$, independent of the sample size $n$, such that for at least one vote vector $\vv$ and classifier $\varphi$, $\wSig\ge \omega$ holds?
\end{quote}

Since $\wSig$ is fully decided by $\mW,\mY,\vv$, we write $\wSig(\mW,\mY,\vv)$ to explicitly show such a relationship. \citet{DBLP:conf/nips/ZouZXL24} reformulate the above open problem as finding a lower bound for
\begin{equation} \label{eq::wSigma-minimax}
  \begin{aligned}
    \mathfrak{W}_{n,K} &\coloneqq \min_{\mW\in\Wcal_{n,K},\ \mY\in\Ycal_{n,K}}
  \max_{\vv\in \Vcal_K} \wSig(\mW, \mY, \vv) \\
  &= \min_{\mW\in\Wcal_{n,K},\ \mY\in\Ycal_{n,K}}
  \max_{\vv\in \Vcal_K} \left\|(\mW\odot\mY)\vv\right\|_1,
  \end{aligned}
\end{equation}
where
\begin{equation*}
  \Wcal_{n,K}
  \coloneqq
  \left\{
  \mW\in\mathbb{R}^{n\times K}
  \,\middle|\,
  \mW_{ij}\ge 0,\ 
  \sum_{i=1}^n\sum_{j=1}^K \mW_{ij}=1
  \right\}
\end{equation*}
is the set of all normalized weight matrices,
\begin{equation*}
  \Ycal_{n,K}
  \coloneqq
  \left\{
  \mY\in\PNOne^{n\times K}
  \,\middle|\,
  \text{each row of $\mY$ is a signed one-hot label vector}
  \right\}
\end{equation*}
is the set of all possible signed one-hot label matrices, and $\Vcal_K\coloneqq\PNOne^K$ is the set of all vote vectors. The notation $\odot$ denotes entrywise multiplication and $\|\cdot\|_1$ denotes the vector $\ell_1$ norm.

\citet[Theorems 3.3-3.4]{DBLP:conf/nips/ZouZXL24} solve the above open problem by proving that $\mathfrak{W}_{n,K} \ge \max \left\{ \frac{1}{n}, \frac{1}{\sqrt{2K}} \right\}$. However, it is unclear whether the lower bound shown by \citet{DBLP:conf/nips/ZouZXL24} is tight. In this paper, we show that the lower bound in \citep{DBLP:conf/nips/ZouZXL24} is not tight, and provide an \textbf{exact characterization} of $\mathfrak{W}_{n,K}$.

\section{Main Theorems} \label{sec::main-theorem}

In this section, we present our main result, the exact characterization of $\mathfrak{W}_{n,K}$. We first take a step to provide a better (compared to those in \citep{DBLP:conf/nips/ZouZXL24}) lower bound for $\mathfrak{W}_{n,K}$, and then show the exact upper and lower bounds for $\mathfrak{W}_{n,K}$ based on our initial bounds.

\begin{theorem}[Upper and Lower Bounds for $\mathfrak{W}_{n,K}$] \label{thm::main-theorem}
  For all integers $n \ge 1$ and $K \ge 2$, we have:
  \begin{equation} \label{ieq::upper-lower-bound}
    \max \left\{ \frac{1}{n}, C_K \right\} \le \mathfrak{W}_{n,K} \le C_{\min\{n,K\}},
  \end{equation}
  where
  \begin{equation} \label{eq::Cq}
    C_q = \begin{cases}
    1 & q=1 \\
    \frac{q}{3q-4} & q\ge2 \text{ and } q \text{ is even}, \\
    \frac{q+1}{3q-1} & q\ge2 \text{ and } q \text{ is odd}.
    \end{cases}
  \end{equation}
\end{theorem}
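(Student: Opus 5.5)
The proof splits into the lower bound $\max\{1/n, C_K\}$ and the upper bound $C_{\min\{n,K\}}$. Throughout, write row $i$ of $\mW\odot\mY$ as a true-class weight $a_i=\mW_{i,y_i}$ with sign $+1$ and false-class weights $b_{ij}=\mW_{ij}$, $j\neq y_i$, with sign $-1$. Set $b_i=\sum_{j\ne y_i}b_{ij}$, $A=\sum_i a_i$ and $B=\sum_i b_i=1-A$. The contribution of row $i$ to $\wSig(\mW,\mY,\vv)$ is
\begin{equation*}
  \Big|\vv_{y_i}a_i-\sum_{j\ne y_i}\vv_j b_{ij}\Big|.
\end{equation*}

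\textbf{Lower bound, $1/n$ part.} The bound $\mathfrak{W}_{n,K}\ge 1/n$ is exactly \citet[Theorem 3.3]{DBLP:conf/nips/ZouZXL24}, so I will cite it.

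\textbf{Lower bound, $C_K$ part.} I use two families of vote vectors and take the better one.
\begin{itemize}
  \item[(i)] The all-ones vector gives $\wSig=\sum_i|a_i-b_i|\ge|A-B|=|2A-1|$ by the triangle inequality.
  \item[(ii)] A random vote vector drawn uniformly from the balanced vectors, i.e.\ those with exactly $K/2$ entries equal to $+1$ for even $K$ (or $(K\pm1)/2$ for odd $K$, chosen appropriately). Since $\max_{\vv}\wSig\ge\Expect_{\vv}\wSig$, it suffices to prove a per-row inequality
  \begin{equation*}
    \Expect_{\vv}\Big|\vv_{y_i}a_i-\sum_{j\ne y_i}\vv_jb_{ij}\Big|\ \ge\ \alpha_K a_i+\beta_K b_i,
  \end{equation*}
  with explicit constants $\alpha_K,\beta_K$.
\end{itemize}
To prove the per-row inequality I would use convexity of $\vw\mapsto\Expect|\langle \vv,\vw\rangle|$ on the face $\{b_{ij}\ge0,\ \sum_j b_{ij}=b_i\}$, together with symmetry under permuting the false classes. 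This should reduce the worst case to $b_i$ spread uniformly over the $K-1$ false classes. In that case the expectation becomes a hypergeometric computation in the number of $+1$ entries among the false coordinates.

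Summing (ii) over rows gives $\max_\vv\wSig\ge\max\{|2A-1|,\ \alpha_KA+\beta_K(1-A)\}$. Minimizing over $A\in[0,1]$, the minimum sits at the crossing point $A^*$ with $2A^*-1=\alpha_KA^*+\beta_K(1-A^*)$. I expect the parity-dependent values of $\alpha_K,\beta_K$ to make this crossing value equal $K/(3K-4)$ or $(K+1)/(3K-1)$.

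\textbf{Main obstacle.} The hard step is the per-row expectation bound. The reduction to the uniform spread is not automatic, because the expectation of a convex function is minimized, not maximized, by spreading, and that direction is what we need. Still, the true-class coordinate interacts with the false ones, so I must verify that nothing worse happens at the extreme points, for instance all false mass concentrated on one class. If a clean linear bound fails, I would instead optimize jointly over the row profile and $A$. A fallback is to mix the all-ones vector and the random balanced vector with a weight chosen in terms of $A$.

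\textbf{Upper bound.} Let $q=\min\{n,K\}$. Use $q$ examples and $q$ classes, with example $i$ labeled $i$; any remaining rows and columns get zero weight. Set $\mW_{ii}=a$ and $\mW_{ij}=b$ for $i\ne j\le q$, normalized by $qa+q(q-1)b=1$. By symmetry, $\wSig$ depends only on the number $p$ of $+1$ entries of $\vv$ among the first $q$ coordinates. A row whose own coordinate is $+1$ contributes $|a-(p-1)b+(q-p)b|$, and a row whose coordinate is $-1$ contributes $|-a-pb+(q-p-1)b|$. Hence
\begin{equation*}
  \wSig=p\,\bigl|a+(q-2p+1)b\bigr|+(q-p)\,\bigl|a+(2p-q+1)b\bigr|.
\end{equation*}
Since $\vv$ and $-\vv$ give the same value, the maximum over $p$ is attained either at the extremes $p\in\{0,q\}$, with value $|a-(q-1)b|$ per row, or near $p=q/2$. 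I would choose the ratio $a/b$ to equalize these two candidates and check that the common value is $C_q$, including the even/odd distinction coming from whether $p=q/2$ is an integer. The case $q=1$, where a single row forces $\wSig=1$, is trivial.
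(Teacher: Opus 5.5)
Your architecture is the paper's. You cite the $1/n$ part. For the $C_K$ part you combine the all-ones vector with a uniformly random (nearly) balanced vote vector and minimize $\max\{|2A-1|,\ \alpha_K A+\beta_K(1-A)\}$ over $A$ (the paper's $\rho$). The upper bound uses the same symmetric construction on $q=\min\{n,K\}$ rows and columns as the paper ($\mW_{ii}=a/M$, $\mW_{ij}=(1-a)/(M(M-1))$), tuned so the all-ones value equals the value at $V=\sum_{j\le q}\vv_j\in\{0,1\}$.

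The one real difference is how you get the per-row inequality. The paper conditions on $\vv_{c_i}$: $\Expect|Z_i|\ge\frac12\Expect[Z_i\mid\vv_{c_i}=+1]-\frac12\Expect[Z_i\mid\vv_{c_i}=-1]$ with $Z_i=\sum_j\mW_{ij}\mY_{ij}\vv_j$. Since $\Expect[\vv_j\mid\vv_c=\pm1]=\mp\frac{1}{K-1}$ (resp.\ $\mp\frac1K$ for odd $K$), linearity gives the bound for every row profile at once.

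Your symmetrization also works, and the worry under ``main obstacle'' is not a real obstacle. The map $u\mapsto\Expect|\langle\vv,u\rangle|$ is convex. Because the law of $\vv$ is exchangeable, it is invariant under permutations of the false coordinates. This group fixes the true coordinate, so the interaction between true and false coordinates is irrelevant. Averaging over the group shows the uniform spread is the minimizer on the face, so concentrated profiles can only do better.

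At the uniform spread, a balanced $\vv$ satisfies $\sum_{j\ne y_i}\vv_j=-\vv_{y_i}$, so $|Z_i|=a_i+b_i/(K-1)$ deterministically. For odd $K$, $|Z_i|$ equals $a_i$ or $a_i+2b_i/(K-1)$ with probabilities $\frac{K+1}{2K}$ and $\frac{K-1}{2K}$, giving $a_i+b_i/K$. Hence $\alpha_K=1$ and $\beta_K=\frac{1}{K-1}$ or $\frac1K$, exactly the paper's constants. The paper's route is shorter. Yours has the merit of exhibiting the uniform false-class spread as the extremal row, which is precisely the row profile of the upper-bound construction.

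Two small repairs are needed. First, the crossing must be taken on the decreasing branch, $1-2A^*=A^*+\beta_K(1-A^*)$. As written, $2A^*-1=A^*+\beta_K(1-A^*)$ forces $A^*=1$ and value $1$. The correct crossing gives $(1+\beta_K)/(3-\beta_K)$, which is $K/(3K-4)$ or $(K+1)/(3K-1)$. The branch $A\ge\frac12$ is dominated because $(1+\beta_K)/2\ge(1+\beta_K)/(3-\beta_K)$ when $\beta_K\le 1$.

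Second, in the upper bound your claim that only $p\in\{0,q\}$ and $p\approx q/2$ matter needs the check the paper does. With $V=2p-q\ge0$, your expression equals $q(a+b)-bV^2$ for $bV\le a+b$ and $V(qb-a-b)$ beyond that. It is decreasing and then linear, so its maximum is at the smallest admissible $V$ or at $V=q$, which is what justifies equalizing just those two candidates.
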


\begin{proposition}[Properties of $C_q$] \label{prop::Cq-properties}
  The sequence $C_q$ defined in \Cref{eq::Cq} has the following properties:
  \begin{enumerate}
    \item $C_q$ is non-increasing with respect to $q$.
    \item If $q$ is odd, then $C_{q+1} = C_q$.
    \item $C_q$ tends to $\frac{1}{3}$ as $q$ tends to infinity.
  \end{enumerate}
\end{proposition}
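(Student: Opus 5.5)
The proposition is elementary, and I would prove it by direct computation from the closed forms in \Cref{eq::Cq}. I would first prove item~2, then use it to reduce item~1 to a comparison along the even (or odd) indices, and finally take limits for item~3.

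\textbf{Item 2.} Let $q\ge 1$ be odd. If $q=1$, then $C_1=1$ and $C_2=2/(6-4)=1$, so the two agree. If $q\ge3$, then $q+1$ is even and $q+1\ge 2$, so
\[
C_{q+1}=\frac{q+1}{3(q+1)-4}=\frac{q+1}{3q-1}=C_q .
\]

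\textbf{Item 1.} Item~2 gives $C_{2m-1}=C_{2m}$ for every $m\ge1$. It therefore suffices to show $C_{2m}\ge C_{2m+1}$ for $m\ge1$, since $C_{2m+1}=C_{2m+2}$ by item~2 again. Write $C_{2m}=\frac{2m}{6m-4}=\frac{m}{3m-2}$ and $C_{2m+1}=\frac{2m+2}{6m+2}=\frac{m+1}{3m+1}$. Both denominators are positive for $m\ge1$, so cross-multiplying is legitimate, and the claim reduces to
\[
m(3m+1)\ge (m+1)(3m-2),
\]
that is, $3m^2+m\ge 3m^2+m-2$. This holds trivially, and in fact strictly. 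Hence $C_q$ is non-increasing, with strict decrease exactly when passing from an even index to the next odd one.

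\textbf{Item 3.} On even indices, $q/(3q-4)=1/(3-4/q)\to 1/3$. On odd indices, $(q+1)/(3q-1)=(1+1/q)/(3-1/q)\to 1/3$. Since both subsequences converge to $1/3$, so does the whole sequence. Combined with monotonicity, this also gives $C_q>1/3$ for all $q$, because $q/(3q-4)>1/3$ and $(q+1)/(3q-1)>1/3$ hold directly.

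There is no real obstacle here. The only care points are treating $q=1$ separately in item~2 and checking that the denominators are positive before cross-multiplying.
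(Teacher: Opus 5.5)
Your proof is correct and takes essentially the same route as the paper: a parity case split on consecutive terms of the closed forms, with $C_1=C_2=1$ handled separately. The only cosmetic difference is that you cross-multiply to show $C_{2m}\ge C_{2m+1}$, whereas the paper computes the difference explicitly, $C_{q+1}-C_q=-8/\bigl((3q+2)(3q-4)\bigr)$ for even $q$.
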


\Cref{thm::main-theorem} provides a lower bound $\Wfrak_{n,K} \ge C_K$. When $n \gg K$, then the $n$ examples are likely to contain all $K$ labels. However, when $n < K$, the $n$ examples can not use up all $K$ labels. So we conjecture that there might be \textbf{effective class number} that really captures the intrinsic hardness caused by the labels. Based on this conjecture, we narrow down the effect of label numbers to $n+1$ (rather than $K$ shown in \Cref{thm::main-theorem}) when $n + 1 < K$ and get the following refined lower bound.

\begin{theorem}[Refined Lower Bound for $\mathfrak{W}_{n,K}$] \label{thm::refine-lower-bound}
  For all integers $n \ge 1$ and $K \ge 2$, we have:
  \begin{equation} \label{ieq::refined-lower-bound}
    \mathfrak{W}_{n,K} \ge C_{\min\{n+1,K\}}.
  \end{equation}
\end{theorem}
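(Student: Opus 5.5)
The plan is to reduce to \Cref{thm::main-theorem} by merging columns, and then use monotonicity of $C_q$ from \Cref{prop::Cq-properties}.

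If $K \le n+1$, then $\min\{n+1,K\}=K$, and the claim is exactly the lower bound $\Wfrak_{n,K}\ge C_K$ from \Cref{thm::main-theorem}. So assume $K > n+1$.

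Fix any $\mW\in\Wcal_{n,K}$ and $\mY\in\Ycal_{n,K}$. Let $T=\{y_1,\dots,y_n\}\subseteq[K]$ be the set of classes that actually occur as labels, and let $U=[K]\setminus T$. Then $1\le |T|\le n$, so $U\neq\emptyset$ because $K>n+1$. Every column $j\in U$ of $\mY$ is identically $-1$.

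Build a reduced instance with $K'=|T|+1$ columns as follows.
\begin{itemize}
  \item Keep the columns of $\mW$ and $\mY$ indexed by $T$ unchanged.
  \item Add one merged column $u$ with $\widetilde{\mW}_{iu}=\sum_{j\in U}\mW_{ij}$ and $\widetilde{\mY}_{iu}=-1$ for all $i$.
\end{itemize}
The reduced matrix $\widetilde{\mW}$ is nonnegative and sums to $1$, so $\widetilde{\mW}\in\Wcal_{n,K'}$. Each row of $\widetilde{\mY}$ is still signed one-hot: the true label $y_i\in T$ is kept and the merged column is a false class. Hence $\widetilde{\mY}\in\Ycal_{n,K'}$, with $K'\ge 2$.

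The key identity handles vote vectors that are constant on $U$. For $\tilde\vv\in\{\pm1\}^{K'}$, lift it to $\vv\in\{\pm1\}^K$ by setting $\vv_j=\tilde\vv_j$ for $j\in T$ and $\vv_j=\tilde\vv_u$ for $j\in U$. Then for every row $i$,
\[
\sum_{j=1}^K \mW_{ij}\mY_{ij}\vv_j=\sum_{j\in T}\mW_{ij}\mY_{ij}\tilde\vv_j-\tilde\vv_u\sum_{j\in U}\mW_{ij}=\big((\widetilde{\mW}\odot\widetilde{\mY})\tilde\vv\big)_i .
\]
Taking $\ell_1$ norms gives $\|(\mW\odot\mY)\vv\|_1=\|(\widetilde{\mW}\odot\widetilde{\mY})\tilde\vv\|_1$. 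Therefore
\[
\max_{\vv\in\Vcal_K}\wSig(\mW,\mY,\vv)\ \ge\ \max_{\tilde\vv\in\Vcal_{K'}}\wSig(\widetilde{\mW},\widetilde{\mY},\tilde\vv)\ \ge\ \Wfrak_{n,K'} .
\]

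Now apply \Cref{thm::main-theorem} to the reduced instance, which has $n$ rows and $K'$ columns. This gives $\Wfrak_{n,K'}\ge C_{K'}$. Since $K'=|T|+1\le n+1$ and $C_q$ is non-increasing by \Cref{prop::Cq-properties}, we get $C_{K'}\ge C_{n+1}=C_{\min\{n+1,K\}}$.

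Because $(\mW,\mY)$ was arbitrary, minimizing the left-hand side over all instances yields $\Wfrak_{n,K}\ge C_{\min\{n+1,K\}}$.

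I do not expect a genuine obstacle. The only points needing care are that the merged instance really lies in $\Wcal_{n,K'}\times\Ycal_{n,K'}$ with $K'\ge2$, and that restricting to vote vectors constant on $U$ can only lower the maximum, which is the direction we need.
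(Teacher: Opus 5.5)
Your proposal is correct and follows essentially the same route as the paper: merge all non-label columns into one always-false column, lift vote vectors that are constant on the merged block, apply the $C_{K'}$ lower bound from \Cref{thm::main-theorem}, and finish with the monotonicity of $C_q$. The only differences are cosmetic. The paper defines the label set using only rows with positive weight and splits cases on whether that set has size $K$. You use all labels and split on $K\le n+1$ versus $K>n+1$, which works equally well.
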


Based on the \textbf{effective class number} conjecture, we further construct a new upper bound as follows, which makes up the exact upper bound together with the upper bound in \Cref{thm::main-theorem}.

\begin{theorem}[Another Upper Bound for $\mathfrak{W}_{n,K}$] \label{thm::another-upper-bound}
  For integers $n \ge 1$ and $K \ge 2$, when $n$ is even and $n+1 \le K$, we have:
  \begin{equation} \label{ieq::another-lower-bound}
    \mathfrak{W}_{n,K} \le C_{n+1}.
  \end{equation}
\end{theorem}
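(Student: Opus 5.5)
The bound will come from an explicit worst-case pair $(\mW,\mY)$, in the same spirit as the construction behind the upper bound $C_{\min\{n,K\}}$ in \Cref{thm::main-theorem}. The new ingredient is to use $n+1$ columns while having only $n$ rows. Since $n+1\le K$, columns $n+2,\dots,K$ can receive zero weight; they contribute nothing to $\wSig$ whatever their votes are. So it suffices to work with $K'=n+1$ effective classes.

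\textbf{Construction.} Example $i\in[n]$ gets label $i$, so class $n+1$ is a false class for every row. I will use a symmetric weight pattern with three free parameters:
\[
\mW_{ii}=a,\qquad \mW_{ij}=b \ \ (j\in[n],\ j\neq i),\qquad \mW_{i,n+1}=c,\qquad n\bigl(a+(n-1)b+c\bigr)=1 .
\]
For a vote vector $\vv$, write $s=\vv_{n+1}$, let $p$ be the number of $+1$ entries among $\vv_1,\dots,\vv_n$, and set $T=\sum_{j\le n}\vv_j=2p-n$. Row $i$ then contributes
\[
\bigl|(a+b)\vv_i-bT-cs\bigr| .
\]
Hence
\[
\wSig=p\,\bigl|a+b-bT-cs\bigr|+(n-p)\,\bigl|a+b+bT+cs\bigr| ,
\]
which depends only on $(p,s)\in\{0,\dots,n\}\times\{\pm1\}$. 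Flipping $\vv\mapsto-\vv$ does not change $\wSig$, so I may fix $s=+1$.

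\textbf{Reducing to a few candidate maximizers.} For fixed $s$, the expression is a piecewise-linear function of $p$. Its breakpoints occur where $bT+cs=\pm(a+b)$, so its maximum over integer $p$ is attained either at an endpoint $p\in\{0,n\}$ or at an integer adjacent to a breakpoint. I expect two regimes to matter, mirroring the two mechanisms in the lower-bound proof.

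\begin{itemize}
\item \emph{The all-agree vote} $p=n$, $s=+1$. This gives $n\,|a-(n-1)b-c|$, the true-versus-false mass imbalance.
\item \emph{The most balanced vote.} Because $n$ is even, this is $p=n/2$, so $T=0$, and it gives $n(a+b)$ when $c\le a+b$. This is exactly where the parity of $n$ enters: with the extra column, the full vote vector has odd length $n+1$, which is why the constant is $C_{n+1}$ of odd type.
\end{itemize}

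\textbf{Choosing the parameters.} I will equalize these competing values, using $a<(n-1)b+c$ so that the all-agree term is $(n-1)b+c-a$, subject to the normalization. The target common value is $(n+2)/(3n+2)$. The freedom in $c$ is what should push the value below $C_n$.

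A naive attempt with $c=b$ gives only $(n+1)/(3n-1)$, which is the wrong constant. So $c$ must be tuned independently; I expect a relation roughly like $c\approx a+b$ or $c=2b$. I will determine it by also equalizing a third candidate, $p=n/2\pm1$ with $s=\pm1$, whose value involves $|a+b\pm 2b\pm c|$.

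\textbf{Verifying optimality over all $(p,s)$.} Once $a,b,c$ are fixed, I check every $(p,s)$ by evaluating the convex pieces at their endpoints and breakpoints. This shows no vote vector exceeds $(n+2)/(3n+2)$, which gives $\Wfrak_{n,K}\le C_{n+1}$.

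\textbf{Main obstacle.} Finding the right $(a,b,c)$ is the hard part. The ansatz above may be too restrictive. If it is, I would fall back on the structure of the $q=n+1$ odd construction from \Cref{thm::main-theorem}'s upper bound, since that construction presumably uses $n+1$ rows. I would delete one of its rows, namely the row of class $n+1$, and redistribute that row's mass over the remaining rows while preserving the per-row symmetry. Then I would re-verify the case analysis, again relying on the fact that $\wSig$ depends only on $(p,s)$.
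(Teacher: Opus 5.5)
\textbf{Same route as the paper, but the proof is not yet there: the parameters are never fixed, and the verification rule you state is invalid.}

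Your approach is the paper's, and the paper's matrix lies inside your three-parameter family. The paper takes $\mY_{ii}=+1$ and zero weight on columns $n+2,\dots,K$. With $D=3n+2$ it sets $a=1/D$, $b=2/(nD)$ and $c=4/(nD)=2b$.

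Your equalization plan recovers exactly these values. Set the all-agree value $1-2na$ and the balanced value $n(a+b)$ equal to a common level $\tau$. The normalization then forces $na=(1-\tau)/2$, $nb=(3\tau-1)/2$ and $nc=(n-(3n-4)\tau)/2$. The candidate $T=-2$, $s=+1$ has value $n(a+b)+2(c-2b)$, so keeping it at most $\tau$ requires $c\le 2b$. That condition is equivalent to $\tau\ge (n+2)/(3n+2)=C_{n+1}$, with equality exactly at $c=2b$.

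So your guess $c=2b$ is the right one, and the fallback of deleting a row from an $(n+1)$-row construction is unnecessary. But the proposal never commits to these values, so as written there is no construction and no verification.

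The more substantive problem is that your verification step rests on a false structural claim. For fixed $s$, $\wSig$ is not piecewise linear in $p$. Write $x=bT+cs$; then row $i$ contributes $|(a+b)\vv_i-x|$. On the central regime $|x|\le a+b$ this gives $\wSig=n(a+b)-Tx=n(a+b)-bT^2-csT$. That is a \emph{concave} quadratic in $T$, with its maximum near $T=-cs/(2b)$, generally far from the endpoints $T=\pm n$ and from the breakpoints $x=\pm(a+b)$. Checking endpoints and breakpoints therefore does not bound the maximum.

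What works, and is what the paper does, is the following.
\begin{itemize}
\item \emph{Central regime.} With $c=2b$ the value is $n(a+b)-bT(T+2s)$. A negative value of $T(T+2s)$ would force $T=-s$, which is impossible because $T=2p-n$ is even. This is exactly where evenness of $n$ is used (the paper's step $VB\ge 0$).
\item \emph{Outer regime $x\ge a+b$.} The value is $T(nb-a-b)+ncs$ with $nb-a-b=(n-2)/(nD)\ge 0$. It is maximized at $T=n$, $s=+1$, giving the all-agree value $(n+2)/D=C_{n+1}$.
\item \emph{Remaining regime $x\le -(a+b)$.} Normalize by $\vv\mapsto-\vv$ so that $x\ge 0$, instead of fixing $s=+1$ as you did. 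This disposes of the regime; otherwise you must treat it separately.
\end{itemize}
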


Combining \Cref{thm::main-theorem,thm::refine-lower-bound,thm::another-upper-bound}, we get \textbf{optimal} upper and lower bounds, leading to the following \textbf{exact characterization} of $\Wfrak_{n,K}$.

\begin{theorem}[Exact Value of $\mathfrak{W}_{n,K}$] \label{thm::exact-value}
  For integers $n \ge 1$ and $K \ge 2$, we have:
  \begin{equation} \label{eq::exact-value}
    \mathfrak{W}_{n,K} = C_{\min\{n+1,K\}}.
  \end{equation}
\end{theorem}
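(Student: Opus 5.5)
The proof combines the three preceding results with the monotonicity facts in \Cref{prop::Cq-properties}. The lower bound $\mathfrak{W}_{n,K}\ge C_{\min\{n+1,K\}}$ holds for all $n\ge1$, $K\ge2$ by \Cref{thm::refine-lower-bound}, so only the matching upper bound $\mathfrak{W}_{n,K}\le C_{\min\{n+1,K\}}$ remains. We obtain it by a case analysis on how $n$ compares with $K$ and, when $n<K$, on the parity of $n$.

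\emph{Case $n\ge K$.} Here $\min\{n,K\}=K=\min\{n+1,K\}$. The upper bound of \Cref{thm::main-theorem} gives $\mathfrak{W}_{n,K}\le C_K$ directly.

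\emph{Case $n+1=K$.} Here $\min\{n,K\}=n=K-1$ and $\min\{n+1,K\}=K$.
\begin{itemize}
\item If $n$ is odd, \Cref{thm::main-theorem} gives $\mathfrak{W}_{n,K}\le C_n$. Item~2 of \Cref{prop::Cq-properties} gives $C_n=C_{n+1}=C_K$, as required.
\item If $n$ is even, then $n+1\le K$, so \Cref{thm::another-upper-bound} applies and gives $\mathfrak{W}_{n,K}\le C_{n+1}=C_K$.
\end{itemize}

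\emph{Case $n+1<K$.} Here $\min\{n+1,K\}=n+1$ and $\min\{n,K\}=n$.
\begin{itemize}
\item If $n$ is odd, \Cref{thm::main-theorem} gives $\mathfrak{W}_{n,K}\le C_n=C_{n+1}$, again by item~2 of \Cref{prop::Cq-properties}.
\item If $n$ is even, \Cref{thm::another-upper-bound} applies because $n+1\le K$, and gives $\mathfrak{W}_{n,K}\le C_{n+1}$.
\end{itemize}
The endpoint $n=1$ is covered by the odd subcases. For $n=1$, $K\ge2$, the bound reads $C_1=1=C_2$, which is consistent with the first case of \Cref{eq::Cq} and item~2.

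In every case the upper bound equals the lower bound, which proves \Cref{eq::exact-value}. The argument itself has no real obstacle. The only points to check are that the odd case always uses the parity identity $C_q=C_{q+1}$ for odd $q$, and that the hypothesis ``$n$ even and $n+1\le K$'' of \Cref{thm::another-upper-bound} is met in every even subcase where it is invoked. The substantive work lies in the three cited theorems, especially the refined lower bound.
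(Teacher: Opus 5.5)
Your proposal is correct and follows the paper's proof: the lower bound comes from the refined lower-bound theorem, and the upper bound from three cases. These are $K\le n$ (the upper bound of the first main theorem), $K\ge n+1$ with $n$ odd (that theorem plus $C_n=C_{n+1}$), and $K\ge n+1$ with $n$ even (the additional upper-bound theorem). Your separate treatment of $K=n+1$ and $K>n+1$ is only a finer split of the paper's single case $K\ge n+1$ and changes nothing.
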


\begin{remark}
  Theorem \ref{thm::exact-value} tells us that,
  \begin{equation*}
    \mathfrak{W}_{n,K} = C_{\min\{n+1,K\}}.
  \end{equation*}
  By Proposition \ref{prop::Cq-properties}, we know that $C_q \le 1$ and decreases to $\frac{1}{3}$ as $q$ tends to infinity, it is easy to see that:
  \begin{equation*}
    \frac{1}{3} < C_K \le \mathfrak{W}_{n,K} = C_{\min\{n+1,K\}} \le 1,
  \end{equation*}
  which means that $\Wfrak_{n,K} = \Theta(1)$ and is a great improvement upon the results in \citep{DBLP:conf/nips/ZouZXL24}. When $n,K \to \infty$, we have $\Wfrak_{n,K} \to \frac{1}{3}$.
\end{remark}

Let's now compare the results in \citep{DBLP:conf/nips/ZouZXL24} and those in this paper. The bounds in \citep{DBLP:conf/nips/ZouZXL24} are
\begin{equation} \label{ieq::bounds-previous}
  \max\left\{ \frac{1}{n}, \frac{1}{\sqrt{2K}} \right\} \le \Wfrak_{n,K} \le 1.
\end{equation}
\citet{DBLP:conf/nips/ZouZXL24} claim that $\Wfrak_{n,K} > 0$ is guaranteed when $n$ and $K$ do not tend to infinity simultaneously. However, this paper shows a much more stronger result: $\Wfrak_{n,K} = C_{\min\{n+1,K\}} \in \left(\frac{1}{3},1\right]$ whenever $n \ge 1$ and $K \ge 2$.

If we set the original input matrix in Algorithm \ref{alg:adaboost.mh} as $\mW_{ij} = \frac{1}{nK}$ for all $i \in [n], j \in [K]$, then by making the upper bound of the exponential loss less than $\frac{1}{nK}$, we can get that
\begin{equation*}
  T > \frac{2 \log (nK)}{\delta^2}.
\end{equation*}
This means that if we run the original AdaBoost.MH algorithm for $T_\mathrm{ori} \coloneqq \left\lceil\frac{2 \log (nK)}{\delta^2}\right\rceil + 1$ steps, then can make sure the combined predictor correctly classify all training data.

Similarly, in the factorized case, solving the equation $\exp \left( -\frac{\delta^2 \left(\wSig\right)^2}{2} T \right) < \frac{1}{nK}$ through the lower bound given by \citet{DBLP:conf/nips/ZouZXL24}, we can get
\begin{equation} \label{eq::T-facotrized-case}
  T_\mathrm{fac} \coloneqq \left\lceil \frac{ 2 \log (nK)}{\delta^2 \left( \wSig \right)^2} \right\rceil + 1.
\end{equation}
\Cref{eq::T-facotrized-case} shows that if we run Factorized AdaBoost.MH for $T_\mathrm{fac}$ steps, then the combined (factorized) predictor can correctly classify all training data. Combining the lower bound in \citep{DBLP:conf/nips/ZouZXL24}, we show that
\begin{equation} \label{eq::T-facotrized-case-ZouZXL}
  T_\mathrm{fac,1} \coloneqq \left\lceil \frac{ 2 \log (nK) \cdot \min \{n^2, 2K \}}{\delta^2} \right\rceil + 1
\end{equation}
steps are sufficient to make sure the convergence of the Factorized AdaBoost.MH algorithm. Compare $T_\mathrm{ori}$ and $T_\mathrm{fac,1}$, we find that according to the results in \citep{DBLP:conf/nips/ZouZXL24}, the iteration number of Factorized AdaBoost.MH is about $\min \{n^2, 2K \}$ times more than that of the original AdaBoost.MH algorithm. When $K$ and $n$ are large, the $\min \{n^2, 2K \}$ term shows that Factorized AdaBoost.MH may need a lot more steps to converge.

However, our results show that the bounds in \citep{DBLP:conf/nips/ZouZXL24} are not tight, and the additional $\min \{n^2, 2K \}$ term is not needed. Specifically, our tight lower bound for $\Wfrak_{n,K}$ shows that
\begin{equation} \label{eq::T-facotrized-case-this-paper}
  T_\mathrm{fac,2} \coloneqq \left\lceil\frac{18 \log (nK)}{\delta^2}\right\rceil + 1
\end{equation}
steps are enough to make sure the Factorized AdaBoost.MH algorithm converges. Compared with $T_\mathrm{ori}$ and \Cref{eq::T-facotrized-case-ZouZXL}, our \Cref{eq::T-facotrized-case-this-paper} tells us that for the AdaBoost.MH algorithm, \textbf{we can use factorized classifiers without loss of the convergence rate}.

\section{Proof Sketch of the Lower Bound} \label{sec::proof-sketch-lower-bound}
The proof of the lower bound consists of two parts. The first part is the proof of the lower bound in \Cref{thm::main-theorem}, the second part uses a more fine-graind construction based on the effective class number conjecture. Due to space limitation, in this section, we provide the proof sketch for the lower bounds in \Cref{thm::main-theorem}. For complete proof details, please refer to Appendix~\ref{sec::proof-lower-bound}.

According to Theorem 3.3 in \citep{DBLP:conf/nips/ZouZXL24}, we have $\mathfrak{W}_{n,K} \ge \frac{1}{n}$. So it remains to prove $\mathfrak{W}_{n,K} \ge C_K$. Before presenting the main proof steps for \Cref{thm::main-theorem}, we first introduce some definitions for convenience.

Fix any $\mW \in \Wcal_{n,K}$ and $\mY \in \Ycal_{n,K}$, for any $i \in [n]$, let $c_i \in [K]$ be the column unique index such that $\mY_{i c_i} = +1$. We define the row mass for the $i$-th row as
\begin{equation*}
  s_i = \sum_{j=1}^K \mW_{ij}.
\end{equation*}

Rows with $s_i=0$ do not affect $\wSig$, so we consider the rows with $s_i > 0$. For rows with $s_i > 0$, we define the true-class weight fraction as
\begin{equation*}
  a_i = \frac{\mW_{i c_i}}{s_i}.
\end{equation*}

It is easy to see that $0 \le a_i \le 1$ and $\sum_{i=1}^n s_i = 1$. Then the total weight for the true classes is
\begin{equation*}
  \rho = \sum_{i=1}^n s_i a_i = \sum_{i=1}^n \mW_{i c_i}.
\end{equation*}

Since $\sum_{i=1}^n s_i = 1$ and $0 \le a_i \le 1$ for all $i \in [n]$, we know that $0\le \rho \le 1$. Fix $\mW,\mY$, we define
\begin{equation*}
  Z_i(\vv) = \sum_{j=1}^K \mW_{ij} \mY_{ij} \vv_j,
\end{equation*}
then, our goal is to make sure that make sure there exists a vector $\vv$ such that $\wSig(\mW,\mY,\vv) = \sum_{i=1}^n |Z_i(\vv)|$ is away from zero. Our main strategy is to construct two different types of vector sets so that $Z_i(\vv)$ cannot simultaneously be small for the two sets.

The first type of vector we consider is the all-one vector $\vv^1 = (+1, \dots, +1)$. For $\vv^1$, $|Z_i(\vv^1)| = \left| \mW_{i c_i} - \sum_{j \ne c_i} \mW_{ij} \right|$, i.e., the difference between the true label mass and the false label mass. To make $|Z_i(\vv^1)|$ small, the mass of the true label and false labels should be close, making $\rho \approx \frac{1}{2}$. Our following lemma formulates such an intuition.

\begin{lemma}[All-one Vector] \label{lem::all-one-lower-bound} 
  Fix integers $n \ge 1$ and $K \ge 2$, for all $\mW \in \Wcal_{n,K}$ and all $\mY \in \Ycal_{n,K}$, we have that:
  \begin{equation}\label{ieq::all-one-lower-bound}
    \underset{\vv \in \Vcal_K}{\max}\wSig(\mW,\mY,\vv) \ge |2\rho -1|.
  \end{equation}
\end{lemma}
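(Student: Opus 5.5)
The plan is to exhibit a single vote vector, the all-one vector $\vv^1=(+1,\dots,+1)$, and show that $\wSig(\mW,\mY,\vv^1)\ge|2\rho-1|$. The maximum over $\Vcal_K$ dominates the value at any particular $\vv$, so this suffices.

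First I will compute $Z_i(\vv^1)$ row by row. Since $\mY_{ic_i}=+1$ and $\mY_{ij}=-1$ for $j\ne c_i$, we get
\begin{equation*}
  Z_i(\vv^1)=\mW_{ic_i}-\sum_{j\ne c_i}\mW_{ij}=2\mW_{ic_i}-s_i .
\end{equation*}
This formula also holds trivially for rows with $s_i=0$, where both sides vanish, so there is no need to separate those rows.

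Next I will apply the triangle inequality to the sum of absolute values:
\begin{equation*}
  \wSig(\mW,\mY,\vv^1)=\sum_{i=1}^n\bigl|2\mW_{ic_i}-s_i\bigr|\ge\Bigl|\sum_{i=1}^n\bigl(2\mW_{ic_i}-s_i\bigr)\Bigr| .
\end{equation*}
Using $\sum_i \mW_{ic_i}=\rho$ and $\sum_i s_i=1$, the right-hand side equals $|2\rho-1|$, which is the claimed bound.

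There is no real obstacle here. The only point needing care is to express $\rho$ directly as $\sum_i\mW_{ic_i}$, rather than through $a_i$, so that rows with zero mass, where $a_i$ is undefined, cause no issue.
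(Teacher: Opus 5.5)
Your proposal is correct and follows the paper's proof: you take the all-one vote vector, derive the row identity $Z_i(\vv^1)=2\mW_{ic_i}-s_i$ (the paper writes this as $s_i|2a_i-1|$), and apply the triangle inequality. Working with $\mW_{ic_i}$ directly rather than $a_i$ is a small improvement, since it avoids the undefined $a_i$ on zero-mass rows, which the paper handles by setting those rows aside.
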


Other than the all-one vector, we need to construct another type of vectors which can make $\sum_{i=1}^n |Z_i(\vv)|$ away from zero when the all-one vector fails to do so (i.e., when $\rho \approx \frac{1}{2}$). Since $|Z_i(\vv)| = \left| \mW_{i c_i} \vv_{c_i} - \sum_{j \ne c_i} \mW_{ij} \vv_j \right|$, conditioned on the fact that $\mW_{i c_i} \approx \sum_{j \ne c_i} \mW_{ij}$ (i.e., the fact that all-one vector fails), a natural idea to make this term away from zero is to set balanced vectors $\vv$ so that the positive and negative contributions in $\sum_{j \ne c_i} \mW_{ij} \vv_j$ cancel each other as much as possible. The following two lemmas formulate the above intuition and construct a balanced set and a nearly balanced set based on the parity of $K$.

\begin{lemma}[Balanced Vector Set for Even $K$] \label{lem::balance-even-lower-bound}
  Fix integers $n \ge 1$ and $K \ge 2$, when $K$ is \textbf{even}, then for all $\mW \in \Wcal_{n,K}$ and all $\mY \in \Ycal_{n,K}$, we have that:
  \begin{equation} \label{ieq::balance-even-lower-bound}
    \underset{\vv \in \Vcal_K}{\max}\wSig(\mW,\mY,\vv) \ge \rho + \frac{1-\rho}{K-1}.
  \end{equation}
\end{lemma}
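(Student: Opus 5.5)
The plan is a probabilistic averaging argument. I will pick $\vv$ uniformly at random from the balanced set
\[
\Bcal_K \coloneqq \Bigl\{\vv\in\Vcal_K : \textstyle\sum_{j=1}^K \vv_j = 0\Bigr\},
\]
which is nonempty because $K$ is even. Since the maximum over $\Vcal_K$ is at least the average over $\Bcal_K$, it suffices to show
\[
\Expect_{\vv\sim\Bcal_K}\sum_{i=1}^n |Z_i(\vv)| \;\ge\; \rho + \frac{1-\rho}{K-1}.
\]

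First I rewrite each row term. Since $\mY_{ic_i}=+1$ and $\mY_{ij}=-1$ for $j\ne c_i$, we have $Z_i(\vv)=\mW_{ic_i}\vv_{c_i}-\sum_{j\ne c_i}\mW_{ij}\vv_j$. Multiplying by $\vv_{c_i}\in\{\pm1\}$ does not change the absolute value, so
\[
|Z_i(\vv)| = \Bigl|\mW_{ic_i} - \sum_{j\ne c_i}\mW_{ij}\,u_j\Bigr|, \qquad u_j\coloneqq \vv_j\vv_{c_i}.
\]
Then I drop the absolute value using $|x|\ge x$:
\[
\Expect|Z_i(\vv)| \ge \mW_{ic_i} - \sum_{j\ne c_i}\mW_{ij}\,\Expect[u_j].
\]

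The key step is computing $\Expect[u_j]$ for $j\ne c_i$. Because $\vv$ is balanced, conditioning on $\vv_{c_i}$ leaves $K/2-1$ of the other $K-1$ coordinates equal to $\vv_{c_i}$ and $K/2$ of them opposite. By symmetry, each fixed $j\neq c_i$ is uniformly distributed over these $K-1$ positions. Hence
\[
\Expect[u_j] = \frac{(K/2-1)-K/2}{K-1} = -\frac{1}{K-1}.
\]
Substituting gives
\[
\Expect|Z_i(\vv)| \ge \mW_{ic_i} + \frac{s_i-\mW_{ic_i}}{K-1}.
\]

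Finally I sum over $i$, using $\sum_i \mW_{ic_i}=\rho$ and $\sum_i s_i=1$. This yields $\rho+(1-\rho)/(K-1)$, and the lemma follows since some $\vv\in\Bcal_K$ attains at least the average.

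I do not expect a real obstacle. The one point to get right is the sign bookkeeping: rows with $s_i=0$ contribute nothing and need no separate treatment. The essential observation is that exact balance makes the false-class coordinates negatively correlated with the true-class coordinate. This is precisely what turns the false-class mass into a positive $\frac{1}{K-1}$ contribution rather than cancelling it. It is also why $K$ must be even: for odd $K$ the same construction gives only a nearly balanced set and a different constant.
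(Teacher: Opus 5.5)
Your proposal is correct and follows essentially the same route as the paper: average over a uniformly random balanced vote vector in $\Bcal_K$, use the correlation $-\frac{1}{K-1}$ between the true-class coordinate and each false-class coordinate, and apply $|x|\ge x$ row by row before summing. The only difference is cosmetic: you multiply $Z_i(\vv)$ by $\vv_{c_i}$ to merge the two cases into one expectation. The paper instead conditions separately on $\vv_{c_i}=+1$ and $\vv_{c_i}=-1$ and uses $|x|\ge x$ and $|x|\ge -x$ respectively.
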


\begin{lemma}[Nearly Balanced Vector Set for Odd $K$] \label{lem::near-balance-even-lower-bound}
  Fix integers $n \ge 1$ and $K \ge 3$, when $K$ is \textbf{odd}, then for all $\mW \in \Wcal_{n,K}$ and all $\mY \in \Ycal_{n,K}$, we have that:
  \begin{equation} \label{ieq::near-balance-even-lower-bound}
    \underset{\vv \in \Vcal_K}{\max}\wSig(\mW,\mY,\vv) \ge \rho + \frac{1-\rho}{K}.
  \end{equation}
\end{lemma}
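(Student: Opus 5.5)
The plan is a first-moment (averaging) argument: the maximum over $\Vcal_K$ is at least the expectation under any distribution on $\Vcal_K$. I will take $\vv$ uniform over the \emph{nearly balanced} vote vectors, i.e.\ vectors in $\PNOne^K$ with exactly $(K+1)/2$ entries equal to $+1$ and $(K-1)/2$ equal to $-1$. This set is nonempty because $K$ is odd, and every vector in it satisfies $\sum_j \vv_j = 1$. Then
\[
\max_{\vv\in\Vcal_K}\wSig(\mW,\mY,\vv)\ \ge\ \Expect_{\vv}\Bigl[\sum_{i=1}^n |Z_i(\vv)|\Bigr],
\]
so it suffices to show $\Expect_\vv |Z_i(\vv)| \ge s_i\bigl(a_i + (1-a_i)/K\bigr)$ for every row $i$ with $s_i>0$. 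Summing over $i$, using $\sum_i s_i a_i=\rho$ and $\sum_i s_i=1$, gives exactly $\rho + (1-\rho)/K$.

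To get the per-row bound, I avoid the absolute value by using $|Z_i(\vv)| \ge \vv_{c_i} Z_i(\vv)$, which holds since $\vv_{c_i}\in\PNOne$. Recall $\mY_{ic_i}=+1$ and $\mY_{ij}=-1$ for $j\neq c_i$. Hence
\[
\vv_{c_i} Z_i(\vv) = \mW_{ic_i} - \sum_{j\ne c_i} \mW_{ij}\,\vv_{c_i}\vv_j ,
\]
and taking expectations gives
\[
\Expect_\vv|Z_i(\vv)| \ \ge\ \mW_{ic_i} - \sum_{j\neq c_i}\mW_{ij}\,\Expect[\vv_{c_i}\vv_j].
\]
So everything reduces to computing the pairwise correlation $\Expect[\vv_k\vv_l]$ for $k\neq l$ under the uniform distribution on nearly balanced vectors.

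This correlation calculation is the only real step, and it follows from symmetry. The distribution is exchangeable, so $\Expect[\vv_k\vv_l]=:\mu$ is the same for all $k\ne l$. Expanding the square gives
\[
1=\Bigl(\sum_j \vv_j\Bigr)^2 = K + \sum_{k\neq l}\vv_k\vv_l ,
\]
and taking expectations yields $K(K-1)\mu = 1-K$, i.e.\ $\mu=-1/K$. Substituting,
\[
\Expect_\vv|Z_i(\vv)| \ \ge\ \mW_{ic_i} + \frac{1}{K}\sum_{j\neq c_i}\mW_{ij} = s_i a_i + \frac{s_i(1-a_i)}{K},
\]
which is the required row bound. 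Rows with $s_i=0$ contribute zero to both sides.

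I expect no serious obstacle. The main point is choosing the distribution so that the correlation is negative; the nearly balanced set does this, whereas the all-one vector gives correlation $+1$. It also helps to use $\vv_{c_i}Z_i$ rather than $Z_i$ itself, so that the true-class term always enters with a plus sign. The same argument with exactly balanced vectors ($\sum_j\vv_j=0$) gives $\mu=-1/(K-1)$ and hence Lemma~\ref{lem::balance-even-lower-bound}.
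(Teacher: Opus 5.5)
Your proposal is correct and follows the paper's argument: you average $\wSig$ over nearly balanced vote vectors, obtain the pairwise correlation $-1/K$ from $\bigl(\sum_j \vv_j\bigr)^2 = 1$ via exchangeability, and bound each row by $\mW_{ic_i} + \frac{1}{K}\sum_{j\neq c_i}\mW_{ij}$. The differences are cosmetic: you use only the vectors with $(K+1)/2$ entries equal to $+1$ and the pointwise bound $|Z_i(\vv)| \ge \vv_{c_i} Z_i(\vv)$, whereas the paper averages over both signs ($\left|\sum_j \vv_j\right| = 1$) and conditions on $\vv_{c_i} = \pm 1$, which by sign symmetry computes the same quantity $\Expect[\vv_{c_i} Z_i(\vv)]$.
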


Let's take the even case as an example. Combining Lemma \ref{lem::all-one-lower-bound} and Lemma \ref{lem::balance-even-lower-bound}, we know that when $\rho$ is away from $\frac{1}{2}$, i.e., $|2\rho -1| > 0$, then we can get $\underset{\vv \in \Vcal_K}{\max}\wSig(\mW,\mY,\vv) \ge |2\rho -1|$ according to \Cref{ieq::all-one-lower-bound}; when $\rho$ is close to $\frac{1}{2}$, then according to \Cref{ieq::balance-even-lower-bound} we know that $\underset{\vv \in \Vcal_K}{\max}\wSig(\mW,\mY,\vv) \ge \rho + \frac{1-\rho}{K-1} \approx \frac{1}{2} + \frac{1}{2(K-1)} > \frac{1}{2}$. So our strategy can make sure at least one of the two kinds of vectors make $\wSig(\mW,\mY,\vv)$ away from zero. Putting Lemmas~\ref{lem::all-one-lower-bound} and \ref{lem::balance-even-lower-bound} together, we know that when $K$ is even and $K \ge 2$,
\begin{equation*}
  \Wfrak_{n,K} \ge \underset{\mW \in \Wcal_{n,K}, \mY \in \Ycal_{n,K}}{\min} \max\left\{ |2\rho-1|, \rho + \frac{1-\rho}{K-1} \right\} \ge \underset{\rho \in [0,1]}{\min} \max\left\{ |2\rho-1|, \rho + \frac{1-\rho}{K-1} \right\}.
\end{equation*}
Minimizing $\underset{\rho \in [0,1]}{\min} \max\left\{ |2\rho-1|, \rho + \frac{1-\rho}{K-1} \right\}$ provides our lower bound for even $K$. For complete and formal arguments, please refer to Appendix \ref{subsec::proof-lower-bound-raw}.

\section{Conclusion}

In this paper, we revisited the convergence analysis of Factorized
AdaBoost.MH through the combinatorial quantity that controls the induced
binary weight mass after the factorized reduction. Previous guarantees showed that this quantity is always positive, but still allowed a dimension-dependent loss in the number of boosting rounds. We proved a sharper characterization: for all \(n\ge1\) and \(K\ge2\), the minimax value \(\Wfrak_{n,K}\) is bounded below by \(\max\{1/n,C_K\}\) and above by \(C_{\min\{n,K\}}\), where the constants \(C_q\) are explicit and converge to \(1/3\). Thus the relevant factorized edge is uniformly of constant order. As a consequence, Factorized AdaBoost.MH retains the boosting-type convergence rate of AdaBoost.MH up to a universal constant factor under the same empirical weak learning condition. This closes the remaining gap in the convergence analysis of factorized multi-class weak classifiers and supports their use as a structured alternative to independent one-against-all weak rules.

\bibliography{iclr2026_conference}
\bibliographystyle{iclr2026_conference}

\appendix

\section{Related Works} \label{sec::related-works}

\paragraph{Binary boosting and weak-to-strong learning.}
Boosting originated as a constructive answer to the question of how a weak learning procedure can be converted into a strong one. AdaBoost is the most prominent representative of this idea: it repeatedly changes the distribution over training examples, calls a weak learner, and combines the returned rules with data-dependent coefficients. Its classical analysis connects three important viewpoints: adaptive reweighting, minimization of exponential loss, and geometric decay of the training error under a positive-edge condition
\citep{10.7551/mitpress/boosting-book}. This combination of a simple algorithm and a sharp theoretical guarantee is one reason boosting remains a basic tool for studying supervised learning algorithms \citep{DBLP:journals/jmlr/Hanneke16,DBLP:conf/colt/MontasserHS19}.

\paragraph{Multi-class boosting and AdaBoost.MH.}
The multi-class setting introduces an additional structural question: should one reduce the task to several binary problems, or should the weak learner operate directly on class-vector predictions? AdaBoost.MH follows the first route in a particularly influential way. It represents each label by a signed vector, places weights on example--label pairs, and uses an exponential surrogate for the weighted Hamming loss \citep{DBLP:journals/ml/SchapireS99}. This framework keeps the binary AdaBoost proof mechanism largely intact because a vector-valued weak rule may be regarded as a collection of class-coordinate binary rules. As a result, AdaBoost.MH serves as a natural bridge from binary boosting to multi-class boosting.

\paragraph{Structured and factorized weak classifiers.}
Although the one-against-all view used in the original AdaBoost.MH algorithm is mathematically convenient, it can ignore the dependencies among class labels. \citet{DBLP:journals/corr/Kegl13} proposed to use more structured multi-class weak learners and introduced factorized rules of the form \(\vh(x)=\alpha\vv\varphi(x)\). In this representation, the scalar
classifier \(\varphi\) determines an input-dependent split, while the vote vector \(\vv\) assigns signs to the classes. This gives a compact way to share one weak decision across all labels and is closely connected to the multi-class Hamming-tree viewpoint \citep{DBLP:journals/corr/Kegl13}. The benefit of this structure is that it provides a more coupled and potentially more efficient base learner; the cost is that the standard AdaBoost.MH convergence argument cannot be applied to the factorized weak classifiers.

\paragraph{Convergence of Factorized AdaBoost.MH.}
The missing convergence step for Factorized AdaBoost.MH was isolated by \citet{DBLP:conf/colt/Kegl14}. After fixing a vote vector, the weighted example--label problem collapses to a binary problem on the original examples, but only with total induced mass \(\|(\mW\odot\mY)\vv\|_1\). Therefore the weak learning condition yields a useful factorized edge only if some vote vector preserves enough mass for every possible current weight matrix and label matrix. \citet{DBLP:conf/nips/ZouZXL24}
proved that such a vote vector always exists and obtained the lower bound \(\Wfrak_{n,K}\ge\max\{1/n,1/\sqrt{2K}\}\), thereby giving a boosting-type convergence theorem for Factorized AdaBoost.MH. However, the above lower bound still leaves a possible slowdown depending on the number of samples or classes. Our contribution is to sharpen this combinatorial bottleneck: we prove that the relevant minimax quantity is bounded below by a universal constant, with the explicit constants \(C_{\min \{ n+1, K\}}\) and matching upper bounds described in the main theorem.

\section{Proof of the Lower Bounds} \label{sec::proof-lower-bound}

\subsection{Proof of the Lower Bound in \Cref{thm::main-theorem}} \label{subsec::proof-lower-bound-raw}
In this section, we provide the proof for the lower bounds in \Cref{thm::main-theorem}. According to Theorem 3.3 in \citep{DBLP:conf/nips/ZouZXL24}, we have $\mathfrak{W}_{n,K} \ge \frac{1}{n}$. So it remains to prove $\mathfrak{W}_{n,K} \ge C_K$.

Fix any $\mW \in \Wcal_{n,K}$ and $\mY \in \Ycal_{n,K}$, for any $i \in [n]$, let $c_i \in [K]$ be the column unique index such that $\mY_{i c_i} = +1$. We define the row mass for the $i$-th row as
\begin{equation*}
  s_i = \sum_{j=1}^K \mW_{ij}.
\end{equation*}

Rows with $s_i=0$ do not affect $\wSig$, so we consider the rows with $s_i > 0$. For rows with $s_i > 0$, we define the true-class weight fraction as
\begin{equation*}
  a_i = \frac{\mW_{i c_i}}{s_i}.
\end{equation*}

It is easy to see that
\begin{equation*}
  0 \le a_i \le 1,  \quad \sum_{i=1}^n s_i = 1.
\end{equation*}

Then the total weight for the true classes is
\begin{equation*}
  \rho = \sum_{i=1}^n s_i a_i = \sum_{i=1}^n \mW_{i c_i}.
\end{equation*}

Since $\sum_{i=1}^n s_i = 1$ and $0 \le a_i \le 1$ for all $i \in [n]$, we know that $0\le \rho \le 1$. Next, given fixed $\mW,\mY$, we provide three lower bounds for $\underset{\vv \in \Vcal_K}{\max}\wSig(\mW,\mY,\vv)$.

\begin{customlemma}{\ref{lem::all-one-lower-bound}}
  Fix integers $n \ge 1$ and $K \ge 2$, for all $\mW \in \Wcal_{n,K}$ and all $\mY \in \Ycal_{n,K}$, we have that:
  \begin{equation*}
    \underset{\vv \in \Vcal_K}{\max}\wSig(\mW,\mY,\vv) \ge |2\rho -1|.
  \end{equation*}
\end{customlemma}

\begin{proof}[Proof of Lemma \ref{lem::all-one-lower-bound}]
  We choose all-one vector to show this lower bound. Take $\vv^1 = (+1, \dots, +1) \in \Vcal_K$, we have that, for the $i$-th row,
  \begin{equation*}
    \left| \sum_{j=1}^K \mW_{ij} \mY_{ij} \vv_j^1 \right| = \left| \sum_{j=1}^K \mW_{ij} \mY_{ij} \right| = \left| \mW_{i c_i} - \sum_{j\ne c_i}^K \mW_{ij} \right| \overset{(a)}{=} \left| s_i a_i - s_i(1-a_i) \right| = s_i |2a_i -1|,
  \end{equation*}
  where step $(a)$ follows from the facts that $\mW_{i c_i} = s_i a_i$ and $\sum_{j\ne c_i}^K \mW_{ij} = s_i (1 - a_i)$. Then we have:
  \begin{equation*}
    \underset{\vv \in \Vcal_K}{\max}\wSig(\mW,\mY,\vv) \ge \wSig(\mW,\mY,\vv^1) = \sum_{i=1}^n s_i |2a_i -1| \ge \left| \sum_{i=1}^n s_i (2a_i -1) \right| \overset{(a)}{=} |2 \rho -1|,
  \end{equation*}
  where step $(a)$ comes from the definition of $\rho$ and the fact that $\sum_{i=1}^n s_i =1$.
\end{proof}

\begin{customlemma}{\ref{lem::balance-even-lower-bound}}
  Fix integers $n \ge 1$ and $K \ge 2$, when $K$ is \textbf{even}, then for all $\mW \in \Wcal_{n,K}$ and all $\mY \in \Ycal_{n,K}$, we have that:
  \begin{equation*}
    \underset{\vv \in \Vcal_K}{\max}\wSig(\mW,\mY,\vv) \ge \rho + \frac{1-\rho}{K-1}.
  \end{equation*}
\end{customlemma}

\begin{proof}[Proof of Lemma \ref{lem::balance-even-lower-bound}]
  Since $K \ge 2$ is even, we can define a set of balanced vectors where the numbers of $-1$ and $+1$ are the same:
  \begin{equation*}
    \Bcal_K \coloneqq \left\{ \vv \in \PNOne^K \Bigg| \sum_{j=1}^K \vv_j = 0 \right\} \subseteq \Vcal_K.
  \end{equation*}
  Now we assume that random vector $\rvv = (\ervv_1, \dots, \ervv_K)$ is uniformly distributed on $\Bcal_K$, then for any distinct pair $j \ne c$, we have that:
  \begin{equation} \label{eq::pos-condition-expect-for-rv-balance}
    \Expect_\rvv \left[ \ervv_j | \ervv_c = +1 \right] \overset{(a)}{=} \frac{\left(\frac{K}{2} - 1\right) - \left(\frac{K}{2}\right)}{K-1} = - \frac{1}{K-1},
  \end{equation}
  where step $(a)$ follows from the fact that, condition on $\ervv_c = +1$, the remaining $K-1$ coordinates are symmetric, and $\frac{K}{2} - 1$ of them are $+1$, $\frac{K}{2}$ of them are $-1$. Similarly, we can prove that:
  \begin{equation} \label{eq::neg-condition-expect-for-rv-balance}
    \Expect_\rvv \left[ \ervv_j | \ervv_c = -1 \right] = \frac{\left(\frac{K}{2}\right) - \left(\frac{K}{2}- 1\right)}{K-1} = \frac{1}{K-1}.
  \end{equation}
  For fixed $\mW, \mY$, we define
  \begin{equation*}
    Z_i(\vv) = \sum_{j=1}^K \mW_{ij} \mY_{ij} \vv_j
  \end{equation*}
  for each row $i$. Then we have that
  \begin{equation*}
    \Expect_\rvv[Z_i(\rvv) | \ervv_{c_i} = +1] = \mW_{i c_i} - \sum_{j \ne c_i} \mW_{ij} \Expect[\ervv_j | \ervv_{c_i} = +1] \overset{(a)}{=} \mW_{i c_i} + \frac{1}{K-1} \sum_{j \ne c_i} \mW_{ij},
  \end{equation*}
  where step $(a)$ is from \Cref{eq::pos-condition-expect-for-rv-balance}. Since $\mW_{i c_i} = s_i a_i$ and $\sum_{j\ne c_i}^K \mW_{ij} = s_i (1 - a_i)$, we have:
  \begin{equation*}
    \Expect_\rvv[Z_i(\rvv) | \ervv_{c_i} = +1] = s_i \left( a_i + \frac{1 - a_i}{K-1} \right).
  \end{equation*}
  Similarly, we have
  \begin{equation*}
    \begin{aligned}
      \Expect_\rvv[Z_i(\rvv) | \ervv_{c_i} = -1] &= -\mW_{i c_i} - \sum_{j \ne c_i} \mW_{ij} \Expect[\ervv_j | \ervv_{c_i} = -1] \\
      &= -\mW_{i c_i} - \frac{1}{K-1} \sum_{j \ne c_i} \mW_{ij} = -s_i \left( a_i + \frac{1 - a_i}{K-1} \right).
    \end{aligned}
  \end{equation*}
  By the distribution of $\rvv$, it is obvious that $\Prob[\ervv_{c_i} = +1] = \Prob[\ervv_{c_i} = -1] = \frac{1}{2}$. Collecting the above together, we have:
  \begin{equation*}
    \begin{aligned}
      \Expect_\rvv \left[ |Z_i(\rvv)| \right] &= \Prob[\ervv_{c_i} = +1] \cdot \Expect_\rvv \left[ |Z_i(\rvv)| \big| \ervv_{c_i} = +1\right] + \Prob[\ervv_{c_i} = -1] \cdot \Expect_\rvv \left[ |Z_i(\rvv)| \big| \ervv_{c_i} = -1\right] \\
      &= \frac{1}{2} \Expect_\rvv \left[ |Z_i(\rvv)| \big| \ervv_{c_i} = +1\right] + \frac{1}{2} \Expect_\rvv \left[ |Z_i(\rvv)| \big| \ervv_{c_i} = -1\right] \\
      &\ge \frac{1}{2} \Expect_\rvv[Z_i(\rvv) | \ervv_{c_i} = +1] + \frac{1}{2} \Expect_\rvv[-Z_i(\rvv) | \ervv_{c_i} = -1] \\
      &= \frac{1}{2} s_i \left( a_i + \frac{1 - a_i}{K-1} \right) + \frac{1}{2} s_i \left( a_i + \frac{1 - a_i}{K-1} \right) = s_i \left( a_i + \frac{1 - a_i}{K-1} \right).
    \end{aligned}
  \end{equation*}
  Then we get that
  \begin{equation*}
    \Expect_\rvv [\wSig(\mW, \mY, \rvv)] = \sum_{i=1}^n \Expect_\rvv \left[ |Z_i(\rvv)| \right] \ge \sum_{i=1}^n s_i \left( a_i + \frac{1 - a_i}{K-1} \right) = \rho + \frac{1-\rho}{K-1}.
  \end{equation*}
  So
  \begin{equation*}
    \underset{\vv \in \Vcal_K}{\max}\wSig(\mW,\mY,\vv) \ge \Expect_\rvv [\wSig(\mW, \mY, \rvv)] \ge \rho + \frac{1-\rho}{K-1}.
  \end{equation*}
\end{proof}

\begin{customlemma}{\ref{lem::near-balance-even-lower-bound}}
  Fix integers $n \ge 1$ and $K \ge 3$, when $K$ is \textbf{odd}, then for all $\mW \in \Wcal_{n,K}$ and all $\mY \in \Ycal_{n,K}$, we have that:
  \begin{equation*}
    \underset{\vv \in \Vcal_K}{\max}\wSig(\mW,\mY,\vv) \ge \rho + \frac{1-\rho}{K}.
  \end{equation*}
\end{customlemma}

\begin{proof}[Proof of Lemma \ref{lem::near-balance-even-lower-bound}]
  When $K \ge 3$ and $K$ is odd, there is no balanced vectors, we then define a set of nearly balanced vectors as:
  \begin{equation*}
    \Bcal_K^\prime \coloneqq \left\{ \vv \in \PNOne^K \Bigg| \left| \sum_{j=1}^n \vv_j \right| = 1 \right\}.
  \end{equation*}
  Now we assume that random vector $\rvv = (\ervv_1, \dots, \ervv_K)$ is uniformly distributed on $\Bcal_K^\prime$. Since the elements of $\rvv$ are symmetric, and it is easy to see that $-\rvv, \rvv$ have the same distribution, we can get that:
  \begin{equation*}
    \Prob(\ervv_j = +1) = \Prob(\ervv_j = -1) = \frac{1}{2} \quad \forall j \in [K].
  \end{equation*}
  Since $\rvv \in \Bcal_K^\prime$, we have $\left( \sum_{j=1}^K \ervv_j \right)^2 = 1$, so $\Expect_\rvv \left[\left( \sum_{j=1}^K \ervv_j \right)^2 \right]= 1$. Expanding the square, we have
  \begin{equation*}
    \Expect_\rvv \left[\left( \sum_{j=1}^K \ervv_j \right)^2 \right] = \sum_{j=1}^K \Expect_\rvv [\ervv_j^2] + 2 \sum_{1 \le j < k \le K} \Expect_\rvv [\ervv_j \ervv_k]
  \end{equation*}
  By the symmetry of all the coordinates of $\rvv$, there exists a constant $r \in [-1,1]$ such that for all $j \ne k$, we have $\Expect_\rvv [\ervv_j \ervv_k] = r$. So we have that:
  \begin{equation*}
    1 = \Expect_\rvv \left[\left( \sum_{j=1}^K \ervv_j \right)^2 \right] = \sum_{j=1}^K \Expect_\rvv [\ervv_j^2] + K(K-1) r \overset{(a)}{=} K + K(K-1) r,
  \end{equation*}
  where $(a)$ is because $\Expect_\rvv [\ervv_j^2] = 1$ for all $j \in [K]$. Solving the above equation, we can get that $\Expect_\rvv [\ervv_j \ervv_k] = r = -\frac{1}{K}$ for all $j \ne k$. Then, for $j \ne k$, by the fact that
  \begin{equation*}
    \begin{aligned}
      \Expect_\rvv [\ervv_j \ervv_k] &= \Prob(\ervv_k = +1) \cdot \Expect_\rvv [\ervv_j \ervv_k | \ervv_k = +1] + \Prob(\ervv_k = -1) \cdot \Expect_\rvv [\ervv_j \ervv_k | \ervv_k = -1] \\
      &= \Prob(\ervv_k = +1) \cdot \Expect_\rvv [\ervv_j | \ervv_k = +1] - \Prob(\ervv_k = -1) \cdot \Expect_\rvv [\ervv_j | \ervv_k = -1] \\
      &= \frac{1}{2} \Expect_\rvv [\ervv_j | \ervv_k = +1] - \frac{1}{2} \Expect_\rvv [\ervv_j | \ervv_k = -1] \\
      &= \frac{1}{2} \Expect_\rvv [\ervv_j | \ervv_k = +1] + \frac{1}{2} \Expect_\rvv [-\ervv_j | -\ervv_k = +1] \\
      &\overset{(a)}{=} \Expect_\rvv [\ervv_j | \ervv_k = +1] \overset{(b)}{=} -\Expect_\rvv [\ervv_j | \ervv_k = -1],
    \end{aligned}
  \end{equation*}
  where steps $(a)$ and $(b)$ follow from the fact that $\rvv$ and $-\rvv$ have the same distribution. So we have that
  \begin{equation} \label{eq::condition-expects-for-rv-nearly-balance}
    \Expect_\rvv [\ervv_j | \ervv_k = +1] = - \frac{1}{K}, \quad \Expect_\rvv [\ervv_j | \ervv_k = -1] = \frac{1}{K}.
  \end{equation}
  Then by similar arguments shown in the proof of Lemma \ref{lem::balance-even-lower-bound}, we have
  \begin{equation*}
    \begin{aligned}
      \Expect_\rvv[Z_i(\rvv) | \ervv_{c_i} \!=\! +1] &= \mW_{i c_i} \!-\! \sum_{j \ne c_i} \mW_{ij} \Expect[\ervv_j | \ervv_{c_i} \!=\! +1] = \mW_{i c_i} \!+\! \frac{1}{K} \sum_{j \ne c_i} \mW_{ij} \!=\! s_i \left( a_i \!+\! \frac{1-a_i}{K}\right), \\
      \Expect_\rvv[Z_i(\rvv) | \ervv_{c_i} \!=\! -1] &= -\mW_{i c_i} \!-\! \sum_{j \ne c_i} \!\mW_{ij} \Expect[\ervv_j | \ervv_{c_i} \!=\! -1] \!=\! -\mW_{i c_i} \!-\! \frac{1}{K} \!\sum_{j \ne c_i} \!\mW_{ij} \!=\! -s_i \!\left(\! a_i \!+\! \frac{1 \!-\! a_i}{K} \!\right).
    \end{aligned}
  \end{equation*}
  By similar arguments in the proof of Lemma \ref{lem::balance-even-lower-bound} again, we can obtain that
  \begin{equation*}
    \begin{aligned}
      \Expect_\rvv \left[ |Z_i(\rvv)| \right] &\ge \frac{1}{2} \Expect_\rvv[Z_i(\rvv) | \ervv_{c_i} = +1] + \frac{1}{2} \Expect_\rvv[-Z_i(\rvv) | \ervv_{c_i} = -1] \\
      &= \frac{1}{2} s_i \left( a_i + \frac{1 - a_i}{K} \right) + \frac{1}{2} s_i \left( a_i + \frac{1 - a_i}{K} \right) = s_i \left( a_i + \frac{1 - a_i}{K} \right).
    \end{aligned}
  \end{equation*}
  Then
  \begin{equation*}
    \underset{\vv \in \Vcal_K}{\max}\wSig(\mW,\mY,\vv) \!\ge\! \Expect_\rvv [\wSig(\mW, \mY, \rvv)] \!=\! \sum_{i=1}^n \Expect_\rvv \left[ |Z_i(\rvv)| \right] \ge \sum_{i=1}^n s_i \left( a_i + \frac{1 - a_i}{K} \right) = \rho + \frac{1-\rho}{K}.
  \end{equation*}
\end{proof}

\subsubsection{Lower Bound for Even $K$} \label{subsec::lower-bound-even-K}
Putting Lemmas~\ref{lem::all-one-lower-bound} and \ref{lem::balance-even-lower-bound} together, we know that when $K$ is even and $K \ge 2$,
\begin{equation} \label{ieq::lower-bound-even-rho}
  \underset{\vv \in \Vcal_K}{\max}\wSig(\mW,\mY,\vv) \ge \max \left\{ |2\rho-1|, \rho + \frac{1-\rho}{K-1} \right\},
\end{equation}
where $\rho \in [0,1]$ and depends on $\mW, \mY$. Minimize the right side of \Cref{ieq::lower-bound-even-rho} over $\rho \in [0,1]$, then for any $\mW \in \Wcal_{n,K}, \mY \in \Ycal_{n,K}$,
\begin{equation*}
  \underset{\vv \in \Vcal_K}{\max}\wSig(\mW,\mY,\vv) \ge \underset{\rho \in [0,1]}{\min} \max\left\{ |2\rho-1|, \rho + \frac{1-\rho}{K-1} \right\}.
\end{equation*}
If $2\rho - 1 \le 0$, i.e., $0 \le \rho \le \frac{1}{2}$, then $|2\rho -1| = 1-2\rho$, which decreases with $\rho$. It is easy to see that $\rho + \frac{1-\rho}{K-1} = \frac{K-2}{K-1} \rho + \frac{1}{K-1}$, which increases with $\rho$. So when $1-2\rho = \rho + \frac{1-\rho}{K-1}$, we achieve $\underset{\rho \in \left[0,\tfrac{1}{2}\right]}{\min} \max\left\{ |2\rho-1|, \rho + \frac{1-\rho}{K-1} \right\}$. Solving the equation $1-2\rho = \rho + \frac{1-\rho}{K-1}$ yields
\begin{equation*}
  \rho_{\mathrm{even}}^* = \frac{K-2}{3K-4} < \frac{1}{3}.
\end{equation*}
The corresponding minimal value is
\begin{equation*}
  1 - 2 \rho_{\mathrm{even}}^* = \frac{K}{3K-4}.
\end{equation*}

If $\rho \ge \frac{1}{2}$, we have:
\begin{equation*}
  \begin{aligned}
    &\underset{\rho \in \left[\frac{1}{2},1\right]}{\min} \max\left\{ |2\rho-1|, \rho + \frac{1-\rho}{K-1} \right\} \ge \underset{\rho \in \left[\frac{1}{2},1\right]}{\min} \left(\rho + \frac{1-\rho}{K-1} \right) \\
    &\overset{(a)}{=} \rho + \frac{1-\rho}{K-1} \bigg|_{\rho=\tfrac{1}{2}} = \frac{K}{2K-2} \overset{(b)}{\ge} \frac{K}{3K-4},
  \end{aligned}
\end{equation*}
where step $(a)$ is from the fact that $\rho + \frac{1-\rho}{K-1}$ increases with $\rho$; and step $(b)$ is because $K \ge 2$. Putting all the above together, we get that when $K \ge 2$ and is even, for any $\mW \in \Wcal_{n,K}, \mY \in \Ycal_{n,K}$,
\begin{equation*}
  \underset{\vv \in \Vcal_K}{\max}\wSig(\mW,\mY,\vv) \ge \frac{K}{3K-4}.
\end{equation*}
Minimizing over $\mW \in \Wcal_{n,K}, \mY \in \Ycal_{n,K}$, we have
\begin{equation*}
  \Wfrak_{n,K} \ge \frac{K}{3K-4} = C_K.
\end{equation*}

\subsubsection{Lower Bound for Odd $K$}

In this section, we take similar steps as Section \ref{subsec::lower-bound-even-K} to prove the result. Putting Lemmas~\ref{lem::all-one-lower-bound} and \ref{lem::near-balance-even-lower-bound} together, we know that when $K$ is odd and $K \ge 3$,
\begin{equation} \label{ieq::lower-bound-odd-rho}
  \underset{\vv \in \Vcal_K}{\max}\wSig(\mW,\mY,\vv) \ge \max \left\{ |2\rho-1|, \rho + \frac{1-\rho}{K} \right\},
\end{equation}
where $\rho \in [0,1]$ and depends on $\mW, \mY$. Minimize the right side of \Cref{ieq::lower-bound-odd-rho} over $\rho \in [0,1]$, then for any $\mW \in \Wcal_{n,K}, \mY \in \Ycal_{n,K}$,
\begin{equation*}
  \underset{\vv \in \Vcal_K}{\max}\wSig(\mW,\mY,\vv) \ge \underset{\rho \in [0,1]}{\min} \max\left\{ |2\rho-1|, \rho + \frac{1-\rho}{K} \right\}.
\end{equation*}
When $0 \le \rho \le \frac{1}{2}$, $|2\rho -1| = 1 - 2 \rho$ decreases with $\rho$, and $\rho + \frac{1-\rho}{K} = \frac{K-1}{K} \rho + \frac{1}{K}$ increases as $\rho$. So when $1-2\rho = \rho + \frac{1-\rho}{K}$, we achieve $\underset{\rho \in \left[0,\tfrac{1}{2}\right]}{\min} \max\left\{ |2\rho-1|, \rho + \frac{1-\rho}{K} \right\}$. Solving the equation $1-2\rho = \rho + \frac{1-\rho}{K}$ yields
\begin{equation*}
  \rho_\mathrm{odd}^* = \frac{K-1}{3K-1} < \frac{1}{3}.
\end{equation*}
The corresponding minimal value is
\begin{equation*}
  1 - 2\rho_\mathrm{odd}^* = \frac{K+1}{3K-1}.
\end{equation*}
When $\rho \ge \frac{1}{2}$,
\begin{equation*}
  \begin{aligned}
    &\underset{\rho \in \left[\frac{1}{2},1\right]}{\min} \max\left\{ |2\rho-1|, \rho + \frac{1-\rho}{K} \right\} \ge \underset{\rho \in \left[\frac{1}{2},1\right]}{\min} \left(\rho + \frac{1-\rho}{K} \right) \\
    &\overset{(a)}{=} \rho + \frac{1-\rho}{K} \bigg|_{\rho=\tfrac{1}{2}} = \frac{K+1}{2K} \overset{(b)}{\ge} \frac{K+1}{3K-1},
  \end{aligned}
\end{equation*}
where step $(a)$ is from the fact that $\rho + \frac{1-\rho}{K}$ increases with $\rho$; and step $(b)$ is because $K \ge 3$. Putting all the above together, we get that when $K \ge 3$ and is odd, for any $\mW \in \Wcal_{n,K}, \mY \in \Ycal_{n,K}$,
\begin{equation*}
  \underset{\vv \in \Vcal_K}{\max}\wSig(\mW,\mY,\vv) \ge \frac{K+1}{3K-1}.
\end{equation*}
Minimizing over $\mW \in \Wcal_{n,K}, \mY \in \Ycal_{n,K}$, we have

\begin{equation*}
  \Wfrak_{n,K} \ge \frac{K+1}{3K-1} = C_K.
\end{equation*}

\subsection{Proof of the Refined Lower Bound} \label{subsec::proof-lower-bound-refined}

In this section, we prove the refined lower bound, i.e., \Cref{thm::refine-lower-bound}. The proof of \Cref{thm::refine-lower-bound} is based on the lower bound in \Cref{thm::main-theorem}.

For given $\mW \in \Wcal_{n,K}, \mY \in \Ycal_{n,K}$, we define $c_i$ as the true label for the $i$-th example, i.e., $\mY_{i, c_i} = 1$ for all $i \in [n]$. We define
\begin{equation*}
  I_+ \coloneqq \left\{ i \in [n] : \sum_{j=1}^K \mW_{ij} > 0 \right\}
\end{equation*}
as the set of indices whose row-weight summation is non-zero. We only need to consider the rows with indices in $I_+$ because the behavior of the rows outside of $I_+$ does not affect the value of $\underset{\vv \in \Vcal_K}{\max}\wSig(\mW,\mY,\vv)$. Define the set of all labels as
\begin{equation*}
  S \coloneqq \{ c_i : i \in I_+ \}.
\end{equation*}

Let $r = |S| \le K$, by the above definitions, it is easy to get that $r \le |I_+| \le n$. If $r = K$, then $K = r \le n < n + 1$, then $\min\{n+1, K\} = K$. Then by the lower bound in \Cref{thm::main-theorem}, we know that
\begin{equation*}
  \Wfrak_{n,K} \ge C_K = C_{\min\{ n+1, K \}}.
\end{equation*}

Then we only need to consider the case that $r < K$. When $r < K$, let $q = r+1$. We enumerate the elements of $S$ and define
\begin{equation*}
  S = \{ s_1, \dots, s_r \}, \ \ \ \ \ \  U \coloneqq [K] \backslash S \ne \emptyset,
\end{equation*}
where $U \ne \emptyset$ because $r = |S| < K$ by our assumption.

For the given $\mW,\mY$, we then construct $\bar{\mW} \in \Wcal_{n,q}$ and $\bar{\mY} \in \Ycal_{n,q}$ so that the following inequation holds:
\begin{equation*}
  \underset{\vv \in \Vcal_K}{\max}\wSig(\mW,\mY,\vv) \ge \underset{\vv \in \Vcal_q}{\max}\wSig(\bar{\mW},\bar{\mY},\vv).
\end{equation*}

For all $i \in [n]$, define
\begin{equation*}
  \bar{\mW}_{ij} \coloneqq \mW_{i s_j}\  \mathrm{for} \ 1 \le j \le r,\ \ \ \ 
  \bar{\mW}_{iq} \coloneqq \sum_{j \in U} \mW_{ij},
\end{equation*}
i.e., we put the weights of the rows inside $U$ in the $q$-th row of $\bar{\mW}$. It is obvious that $\bar{\mW}_{ij} \ge 0$ for all $i \in [n], j \in [q]$. Moreover,
\begin{equation*}
  \begin{aligned}
    \sum_{i=1}^n \sum_{j=1}^q \bar{\mW}_{ij} &= \sum_{i=1}^n \left( \sum_{j=1}^r \mW_{i s_j} + \sum_{j \in U} \mW_{ij} \right) = \sum_{i=1}^n \sum_{j=1}^K \mW_{ij} = 1.
  \end{aligned}
\end{equation*}
So we verified that $\bar{\mW} \in \Wcal_{n,q}$. Then we construct $\bar{\mY}$.

For any $i \in I_+$, we have $c_i \in S$, then by the definition of $S$, there exists an unique index $l(i) \in [r]$ such that $c_i = s_{l(i)}$. Then for all $i \in I_+$, we define
\begin{equation*}
  \bar{\mY}_{i,l(i)} = +1, \ \ \ \  \bar{\mY}_{ij} = -1 \ \mathrm{for}\  j \in [r+1]\backslash \{l(i) \}.
\end{equation*}
For $i \notin I_+$, the behavior of the $i$-th row does not affect the result, we can set $\bar{\mY}_i$ to be any $q$-dimensional signed one-hot vector. It is easy to see that $\bar{\mY} \in \Ycal_{n,q}$.

Fix the vector $\vu = (\evu_1, \dots, \evu_q) \in \Vcal_q$, we construct a $K$-dimensional vector $\vv \in \Vcal_K$ as follows:
\begin{equation*}
  \vv_{s_j} (\vu) = \evu_j \ \mathrm{for} \ 1 \le j \le r, \ \ \  \vv_j(\vu) = \evu_q \ \mathrm{for} \ j \in U.
\end{equation*}
For any $i \in I_+$, since the true label is $c_i$, then $\mY_{i c_i} = +1$ and $\mY_{ij} = -1$ for $j \in U$. Then we have:
\begin{equation*}
  \begin{aligned}
    \sum_{j=1}^K \mW_{ij} \mY_{ij} \vv_j(\vu) &= \mW_{i c_i} \vv_{c_i}(\vu) - \sum_{1 \le j \le K, j \ne c_i} \mW_{ij} \vv_j(\vu) \\
    &\overset{(a)}{=} \mW_{i c_i} \vu_{l(i)} - \sum_{1 \le j \le K, j \ne c_i} \mW_{ij} \vv_j(\vu) \\
    &= \mW_{i c_i} \vu_{l(i)} - \sum_{j \in S\backslash \{ c_i\}} \mW_{ij} \vv_j(\vu)- \sum_{j \in U} \mW_{ij} \vv_j(\vu) \\
    &\overset{(b)}{=} \mW_{i c_i} \vu_{l(i)} - \sum_{1 \le l \le r, l \ne l(i)} \mW_{i s_l} \vv_{s_l}(\vu)- \sum_{j \in U} \mW_{ij} \vv_j(\vu) \\
    &\overset{(c)}{=} \mW_{i s_{l(i)}} \vu_{l(i)} - \sum_{1 \le l \le r, l \ne l(i)} \mW_{i s_l} \vu_l - \sum_{j \in U} \mW_{ij} \vu_q \\
    &\overset{(d)}{=} \bar{\mW}_{i l(i)} \bar{\mY}_{i l(i)} \vu_{l(i)} + \sum_{1 \le l \le r, l \ne l(i)} \bar{\mW}_{i l} \bar{\mY}_{i l} \vu_l + \bar{\mW}_{iq} \bar{\mY}_{iq} \vu_q \\
    &= \sum_{l=1}^q \bar{\mW}_{il} \bar{\mY}_{il} \vu_l,
  \end{aligned}
\end{equation*}

where step $(a)$ is from the definition of $\vv(\vu)$ and the fact that $c_i = s_{l(i)}$; $(b)$ changes the indices of the sum over $S \backslash \{c_i \}$; $(c)$ is from the definition of $\vv_{s_j} (\vu)$ and the fact that $c_i = s_{l(i)}$; and $(d)$ is from the definition of $\bar{\mW}, \bar{\mY}$. If $i \notin I_+$, both sides of the above equation equals to zero. So for any $i \in [n]$, we have:
\begin{equation*}
  \left| \sum_{j=1}^K \mW_{ij} \mY_{ij} \vv_j(\vu) \right|= \left|\sum_{l=1}^q \bar{\mW}_{il} \bar{\mY}_{il} \vu_l \right|.
\end{equation*}

Sum over $i \in [n]$ yields
\begin{equation*}
  \wSig(\mW,\mY,\vv(\vu)) = \wSig(\bar{\mW}, \bar{\mY}, \vu).
\end{equation*}
Assume $\vu^* = \arg\max_{\vu \in \Vcal_q} \wSig(\bar{\mW}, \bar{\mY}, \vu)$, then we have:
\begin{equation*}
  \underset{\vv \in \Vcal_K}{\max} \wSig(\mW,\mY,\vv) \ge \wSig(\mW,\mY,\vv(\vu^*)) = \wSig(\bar{\mW}, \bar{\mY}, \vu^*) = \underset{\vu \in \Vcal_q}{\max} \wSig(\bar{\mW}, \bar{\mY}, \vu).
\end{equation*}

By the proof of the lower bounds of \Cref{thm::main-theorem}, we have $\underset{\vu \in \Vcal_q}{\max} \wSig(\bar{\mW}, \bar{\mY}, \vu) \ge C_q$. So we can conclude that when $r < K$,
\begin{equation*}
  \underset{\vv \in \Vcal_K}{\max} \wSig(\mW,\mY,\vv) \ge C_q \overset{(a)}{\ge} C_{\min \{ n+1, K \}},
\end{equation*}
where step $(a)$ is from the fact that $q \le \min\{n+1,K \}$(because $q = r + 1 \le n+1$ and $q = r + 1 < K + 1$) and the monotonicity property in Proposition~\ref{prop::Cq-properties}. Minimize over $\mW \in \Wcal_{n,K}, \mY \in \Ycal_{n,K}$, we get
\begin{equation*}
  \Wfrak_{n, K} \ge C_{\min \{ n+1, K \}}.
\end{equation*}
Putting the cases $r < K$ and $r \ge K$ together, we conclude that for any integers $n \ge 1$ and $K \ge 2$,
\begin{equation*}
  \Wfrak_{n, K} \ge C_{\min \{ n+1, K \}}.
\end{equation*}

\section{Proof of the Upper Bounds} \label{sec::proof-upper-bound}

\subsection{Proof of the Upper Bound in \Cref{thm::main-theorem}} \label{subsec::proof-upper-bound-raw}

Let $M \coloneqq \min\{ n,K\}$, we choose any $\mY^\prime \in \Ycal_{n,K}$ such that:
\begin{equation*}
  \mY_{ii}^\prime = +1 \quad\quad \forall i \in [M],
\end{equation*}
i.e., we set the true label of the $i$-th example as $i$ for $i \in [M]$. For any fixed parameter $a \in [0,1]$, we construct the weight matrix $\mW^a$ as follows:
\begin{enumerate}
  \item We set $\mW^a_{ii} \coloneqq \frac{a}{M}$ for all $i \in [M]$.
  \item For $i \in [M]$ and $j \in [M]\backslash \{ i \}$, we set $\mW^a_{ij} \coloneqq \frac{\frac{1}{M} - \mW^a_{ii}}{M-1} = \frac{1-a}{M(M-1)}$.
  \item When $i > M$ or $j > M$, we set $\mW^a_{ij} = 0$.
\end{enumerate}
From the above construction, it is easy to verify that $\mW^a \in \Wcal_{n,K}$ and $s_i = \sum_{j=1}^M \mW^a_{ij} = \frac{1}{M}$. When $M = 1$, it is easy to see that $\mW^a_{11} = a$ and $\wSig(\mW^a,\mY^\prime,\vv) = 1$ for any $\vv \in \Vcal_K$, which directly leads to the upper bound $\Wfrak_{n,K} \le C_1$. Without loss of generality, we assume that $M \ge 2$ in the remaining arguments.

Next, we calculate $\wSig(\mW^a,\mY^\prime,\vv)$ for a given $\vv$. Fix $\vv \in \Vcal_K$, since $\vv_{M+1}, \dots, \vv_K$ do not effect the value of $\wSig(\mW^a,\mY^\prime,\vv)$, define
\begin{equation*}
  V \coloneqq \sum_{j=1}^M \vv_j, \quad b \coloneqq \frac{1-a}{M-1}.
\end{equation*}
Then
\begin{equation*}
  \begin{aligned}
    \wSig(\mW^a,\mY^\prime,\vv) &= \sum_{i=1}^n \left| \sum_{j=1}^K \mW^a_{ij} \cdot \mY^\prime \cdot \vv_{j} \right| \overset{(a)}{=} \sum_{i=1}^M \left| \mW^a_{ii} \vv_i - \sum_{j=1,j\ne i}^M \mW^a_{ij} \vv_{j} \right| \\
    &\overset{(b)}{=} \frac{1}{M} \sum_{i=1}^M \left| a \vv_i - \sum_{j=1,j\ne i}^M b \vv_{j} \right| \overset{(c)}{=} \frac{1}{M} \sum_{i=1}^M \left| a \vv_i - b (V - \vv_i)  \right| \\
    &= \frac{1}{M} \sum_{i=1}^M \left| (a+b)\vv_i -bV  \right|,
  \end{aligned}
\end{equation*}
where steps $(a)$ and $(b)$ are from the definition of $\mW^a, \mY^\prime$; step $(c)$ is from the fact that $\sum_{j=1,j\ne i}^M \vv_j = V - \vv_i$. Assume that among $\vv_1, \dots, \vv_M$, there are $p$ of them have value $+1$, and $M - p$ of them have value $-1$. Then, solving the equation $p - (M - p) = V$ gives
\begin{equation*}
  p = \frac{M+V}{2}, \quad\quad M-p = \frac{M-V}{2}.
\end{equation*}
Then,
\begin{equation*}
  \wSig(\mW^a,\mY^\prime,\vv) = \frac{1}{M} \left[p |(a+b)-bV| + (M-p) |(a+b)+bV| \right] \coloneqq f(V),
\end{equation*}
where the possible value set of $V$ is
\begin{equation*}
  D_V \coloneqq \left\{ -M , -M + 2, \dots, M-2, M \right\}.
\end{equation*}
Fix $a,b,M$, the next step is to figure out the maximal value of $f(V)$ over $D_V$. By the definition of $f(V)$, it is easy to see that $f(V) = f(-V)$ for any $V \in D_V$, so it suffices to consider $0 \le V \le M$. Since $a,b,V \ge 0$, we have $f(V) = \frac{1}{M} \left[p |(a+b)-bV| + (M-p) \left((a+b)+bV \right) \right]$.

When $(a+b)-bV \ge 0$, i.e., $0 \le V \le \frac{a+b}{b}$, we have
\begin{equation*}
  f(V) \!=\! \frac{1}{M} \left[p \left((a+b)\!-\! bV \right) \!+\! (M\!-\!p) \left((a+b)+bV \right) \right] \!=\! (a+b)+ \frac{bV}{M} (M-2p) \overset{(a)}{=} (a+b) - \frac{bV^2}{M},
\end{equation*}
where step $(a)$ is because $p = \frac{M+V}{2}$.

When $(a+b)-bV \le 0$, i.e., $\frac{a+b}{b} \le V \le M$, we have
\begin{equation*}
  f(V) \!=\! \frac{1}{M} \left[p \left(bV \!-\! (a+b) \right) \!+\! (M \!-\! p) \left((a+b)+bV \right) \right] \!=\! bV + \frac{a+b}{M} (M-2p) = V \left( b - \frac{a+b}{M} \right).
\end{equation*}
Collecting the above, we get
\begin{equation} \label{eq::f-V}
  f(V) = \begin{cases}
  (a+b) - \frac{bV^2}{M} & \text{if } 0 \le V \le \frac{a+b}{b} \\
  V \left( b - \frac{a+b}{M} \right) & \text{if } \frac{a+b}{b} \le V \le M.
  \end{cases}
\end{equation}

\subsubsection{Upper Bound for Even $M$} \label{subsubsec::upper-bound-even-M}
When $M \ge 2$ and is even, take $a = a^e \coloneqq \frac{M-2}{3M-4}$, then we have
\begin{equation*}
  b = \frac{1-a}{M-1} = \frac{2}{3M-4}, \quad\quad a+b = \frac{M}{3M-4}, \quad\quad \frac{a+b}{b} = \frac{M}{2}.
\end{equation*}
According to \Cref{eq::f-V}, when $0 \le V \le \frac{M}{2}$,
\begin{equation*}
  f(V) = (a+b) - \frac{bV^2}{M} \le a + b = \frac{M}{3M-4};
\end{equation*}
when $\frac{M}{2} \le V \le M$,
\begin{equation*}
  b - \frac{a+b}{M} = \frac{1}{3M -4} > 0,
\end{equation*}
so we have
\begin{equation*}
  f(V) \le f\left( M \right) = M \cdot \frac{1}{3M -4} = \frac{M}{3M-4}.
\end{equation*}
In conclusion, $f(V) \le \frac{M}{3M-4}$ on $D_V$, which means that
\begin{equation*}
  \Wfrak_{n,K} \le \underset{\vv \in \Vcal_K}{\max}\wSig(\mW^{a^e},\mY^\prime,\vv) \le \frac{M}{3M-4}.
\end{equation*}

\subsubsection{Upper Bound for Odd $M$} \label{subsubsec::upper-bound-odd-M}
When $M \ge 3$ and is odd, take $a = a^o \coloneqq \frac{M-1}{3M-1}$, then we have
\begin{equation*}
  b = \frac{1-a}{M-1} = \frac{2M}{(M-1)(3M-1)}, \quad\quad a+b = \frac{M^2+1}{(M-1)(3M-1)}, \quad\quad \frac{a+b}{b} = \frac{M^2+1}{2M}.
\end{equation*}
Since $M$ is odd, $V$ can not have even value, then we consider $V \in \{ 1,3,5, \dots, M \}$. According to \Cref{eq::f-V}, when $1 \le V \le \frac{a+b}{b}$, $f(V)$ decreases with $V$, so we have $f(V) \le f(1) = \frac{M+1}{3M-1}$.

When $\frac{a+b}{b} \le V \le M$, 
\begin{equation*}
  b - \frac{a+b}{M} = \frac{M+1}{M(3M-1)} > 0,
\end{equation*}
so we have
\begin{equation*}
  f(V) \le f(M) = M \cdot \frac{M+1}{M(3M-1)} = \frac{M+1}{3M-1}.
\end{equation*}
In conclusion, $f(V) \le \frac{M+1}{3M-1}$ on $D_V$, which means that
\begin{equation*}
  \Wfrak_{n,K} \le \underset{\vv \in \Vcal_K}{\max}\wSig(\mW^{a^o},\mY^\prime,\vv) \le \frac{M+1}{3M-1}.
\end{equation*}

Putting the results in Sections \ref{subsubsec::upper-bound-even-M} and \ref{subsubsec::upper-bound-odd-M} together proves the upper bound.

\subsection{Proof of the Upper Bound in \Cref{thm::another-upper-bound}} \label{subsec::proof-upper-bound-another}

If $n$ is even, then $n+1$ is odd and $C_{n+1} = \frac{(n+1)+1}{3(n+1)-1} = \frac{n+2}{3n+2}$. For simplicity, define $D=3n+2$, then we construct $\mW \in \Wcal_{n,K}, \mY \in \Ycal_{n,K}$.

Since $n+1 \le K$, we define
\begin{equation*}
  \mY_{ii} = +1 \ \mathrm{for}\  i \in [n], \ \ \ \  \mY_{ij} = -1 \ \mathrm{for\ all} \ j \ne i.
\end{equation*}
It is easy to verify that $\mY \in \Ycal_{n,K}$.

For $\mW$, we set the weights for the $j$-th column to be $0$ for all $j \ge n+2$ and define
\begin{equation*}
  \begin{aligned}
    \mW_{ii} &= \frac{1}{D}\ \ \ \mathrm{for} \ \ i \in [n], \\
    \mW_{ij} &= \frac{2}{nD}\ \ \  \mathrm{for} \ \ 1 \le j \le n \ \mathrm{and} \ \ j \ne i, \\
    \mW_{i,n+1} &= \frac{4}{nD} \ \ \ \mathrm{for} \ \  i \in [n], \\
    \mW_{ij} &= 0 \ \ \ \mathrm{for}\ \ j \ge n+2.
  \end{aligned}
\end{equation*}
It is obvious that $\mW_{ij} \ge 0$ for $i \in [n], j \in [K]$. For any $i \in [n]$, we have:
\begin{equation*}
  \sum_{j=1}^K \mW_{ij} = \frac{1}{D} + (n-1) \frac{2}{nD} + \frac{4}{nD} = \frac{3n+2}{nD} = \frac{1}{n}.
\end{equation*}
So the weights for each row are equal and $\sum_{i=1}^n \sum_{j=1}^K \mW_{ij} = 1$, so $\mW \in \Wcal_{n,K}$.

For any $\vv \in \Vcal_K$, let
\begin{equation*}
  V \coloneqq \sum_{j=1}^n \evv_j, \ \ \ \ u = \evv_{n+1},
\end{equation*}
then we have:
\begin{equation*}
  \begin{aligned}
    Z_i(\vv) &\coloneqq \sum_{j=1}^K \mW_{ij} \mY_{ij} \evv_j =  \sum_{j=1}^{n+1} \mW_{ij} \mY_{ij} \evv_j \\
    &\overset{(a)}{=} \frac{1}{D} \evv_i - \frac{2}{nD} \sum_{1\le j \le n, j \ne i} \evv_j - \frac{4}{nD} u \\
    &\overset{(b)}{=} \frac{1}{D} \evv_i - \frac{2}{nD} (V-\evv_i) - \frac{4}{nD}u \\
    &= \frac{(n+1)\evv_i - 2V -4u}{nD},
  \end{aligned}
\end{equation*}
where $(a)$ is from the definitions of $\mW$ and $\mY$; and $(b)$ is from the definition of $V$. So
\begin{equation*}
  \wSig(\mW,\mY,\vv) = \sum_{i=1}^n |Z_i(\vv)| = \frac{1}{nD} \sum_{i=1}^n \left| (n+2)\evv_i -2V-4u \right|.
\end{equation*}

For simplicity, let $A = n+2$ and $B = 2V+4u$, let $p$ be the number of $+1$ in $\evv_1, \dots, \evv_n$, then by the definition of $V$, we have $p - (n-p) = V$, which means that
\begin{equation*}
  p = \frac{n+V}{2}.
\end{equation*}

Define the function
\begin{equation*}
  f(B) \coloneqq \sum_{i=1}^n |A\evv_i -B| = p |A - B| + (n-p) |A+B|,
\end{equation*}
where the last equation is from the definition of $p$. As functions of $\vv$, we can easily show that $V(-\vv) = -V(\vv)$, $B(-\vv) = - B(\vv)$, and that $f(B(-\vv)) = f(B(\vv))$. To get the maximal value of $f(B(\vv))$, we can assume with not loss of generality that $B \ge 0$. Since $A = n+2$, then we have $A+B > 0$, so
\begin{equation*}
  f(B) = p |A - B| + (n-p) (A+B), \ \ \ \  B \ge 0.
\end{equation*}

If $0 \le B \le A$, then $|A-B| = (A-B)$, then
\begin{equation*}
  f(B) = p(A-B) + (n-p)(A+B) = nA + (n-2p)B.
\end{equation*}
Since $2p = n+V$, then $n-2p = -V$, then we have:
\begin{equation*}
  f(B) = nA - VB.
\end{equation*}
Since $n$ is even, then by the definition of $V$, it is obvious that $V$ is even. By the definition of $B$ and the assumption that $B \ge 0$, we have $B = 2(V+2u) \ge 0$, i.e., $V+2u \ge 0$. We consider the value of $u$.
\begin{itemize}
  \item If $u = +1$, then we have $V+2 \ge 0$, so $V \ge -2$. Since $V$ is even, then either $V=-2$ or $V\ge 0$. When $V = -2$, then $B = 0$, so $BV = 0$. When $V \ge 0$, since $B \ge 0$, we have $BV \ge 0$. So we have $BV \ge 0$.
  \item If $u = -1$, then we have $V-2 \ge 0$, i.e., $V \ge 2$. Since $B \ge 0$, we have $BV \ge 0$.
\end{itemize}
So $BV \ge 0$ regardless of the value of $u$. Then we have:
\begin{equation*}
  f(B) = nA - VB \le nA = n(n+2).
\end{equation*}

If $B \ge A$, then
\begin{equation*}
  f(B) = p(B-A) + (n-p)(A+B) = nB + (n-2p)A = nB - VA.
\end{equation*}
Since $B = 2V + 4u$ and $A = n+2$, we can obtain that:
\begin{equation*}
  f(B) = n(2V+4u) - (n+2)V = (n-2)V + 4nu.
\end{equation*}
Since $u \le 1$ and $V \le n$, we have:
\begin{equation*}
  f(B) = (n-2)V + 4nu \le n(n-2) + 4n = n(n+2).
\end{equation*}

So we have $f(B) \le n(n+2)$ for $B\ge 0$. Then
\begin{equation*}
  \wSig(\mW,\mY,\vv) = \frac{f(B(\vv))}{nD} \le \frac{n(n+2)}{nD} = \frac{n+2}{3n+2} = C_{n+1}.
\end{equation*}

Minimize over $\mW \in \Wcal, \mY \in \Ycal$, we have:
\begin{equation*}
  \Wfrak_{n,K} \le C_{n+1}.
\end{equation*}

\section{Other Missing Proofs} \label{sec::proof-others}
\begin{proof}[Proof of Proposition \ref{prop::Cq-properties}]
  It is easy to verify that $C_1 = C_2 = 1$, so we consider $q \ge 2$.

  If $q$ is even, then
  \begin{equation*}
    \begin{aligned}
      C_{q+1} - C_q &= \frac{(q+1)+1}{3(q+1)-1} - \frac{q}{3q-4} = \frac{q+2}{3q+2} - \frac{q}{3q-4} \\
      &= \frac{(q+2)(3q-4) - q(3q+2)}{(3q+2)(3q-4)} = \frac{-8}{(3q+2)(3q-4)} < 0.
    \end{aligned}
  \end{equation*}
  If $q$ is odd, then
  \begin{equation*}
    C_{q+1} - C_q = \frac{(q+1)}{3(q+1)-4} - \frac{q+1}{3q-1} = \frac{q+1}{3q-1} - \frac{q+1}{3q-1} = 0.
  \end{equation*}
  So $C_{q+1} - C_q \le 0$ for all $q \in \mathbb{N}_+$, which means that $C_q$ is non-increasing with respect to $q$.

  It is easy to see that $\lim_{q \to \infty} \frac{q}{3q-4} = \frac{1}{3}$ and $\lim_{q \to \infty} \frac{q+1}{3q-1} = \frac{1}{3}$, so $\lim_{q \to \infty} C_q = \frac{1}{3}$.
\end{proof}

\begin{proof}[Proof of \Cref{thm::exact-value}]
  Define $M = \min\{ n + 1, K \}$, according to \Cref{thm::refine-lower-bound}, we know that $\Wfrak_{n,K} \ge C_M$. For the upper bound, we consider there cases.
  \begin{itemize}
    \item When $K \le n$, then $M = \min\{n+1, K \} = K = \min\{ n,K\}$. Then by the upper bound in \Cref{thm::main-theorem}, we know that $\Wfrak_{n,K} \le C_{\min\{ n,K\}} = C_M$.
    \item When $K \ge n+1$ and $n$ is odd, $M = \min\{n+1, K \} = n+1$ and $\min\{ n, K\} = n$. Then by the upper bound in \Cref{thm::main-theorem}, we have $\Wfrak_{n,K} \le C_{\min\{ n,K\}} = C_n $. By the proof of Proposition \ref{prop::Cq-properties}, when $n$ is odd, $C_{n+1} = C_n$. So $\Wfrak_{n,K} \le C_n = C_{n+1} = C_M$.
    \item When $K \ge n+1$ and $n$ is even, $M = \min\{n+1, K \} = n+1$, by \Cref{thm::another-upper-bound}, $\Wfrak_{n,K} \le C_{n+1} = C_M$.
  \end{itemize}
  In conclusion, for all integers $n \ge 1$ and $K \ge 2$, we have $\Wfrak_{n,K} = C_{\min\{ n+1, K\}}$.
\end{proof}

\end{document}